\def\isarxiv{}          

\ifdefined\isarxiv
  \documentclass[11pt]{article}
\else
  \documentclass{article}
  \usepackage{neurips_2026}
\fi

\usepackage{amsmath,amsthm}
\usepackage{algorithm,algorithmic}
\usepackage{graphicx}
\usepackage{booktabs}
\usepackage{url}
\usepackage{xcolor}
\usepackage{enumitem}
\usepackage{wrapfig}
\usepackage[normalem]{ulem}
\setlist[itemize]{leftmargin=1.8em, itemsep=1.5pt,topsep=0pt}
\setlist[enumerate]{leftmargin=1.8em, itemsep=1.5pt,topsep=0pt}

\ifdefined\isarxiv
  \usepackage[margin=1in]{geometry}
  \usepackage[T1]{fontenc}
  \usepackage{microtype}
  \usepackage{grffile}
  \usepackage{wrapfig,epsfig,epstopdf}
  \usepackage{multirow,makecell}
  \usepackage{subcaption}
  \usepackage{dsfont}
  \usepackage[bitstream-charter,cal=cmcal]{mathdesign}
  \usepackage[square]{natbib}
  
  \usepackage{tikz}
  \usetikzlibrary{positioning,calc}
  \usetikzlibrary{arrows,arrows.meta}
  \usepackage{pifont}
  \numberwithin{equation}{section}
\else
  \usepackage[utf8]{inputenc}
  \usepackage[T1]{fontenc}
  \usepackage[final,activate={true,nocompatibility},protrusion=true,expansion=false]{microtype}
  \usepackage{nicefrac}
  \usepackage{bbm}
  \usepackage{tikz}
  \usetikzlibrary{positioning,calc}
  \usetikzlibrary{arrows,arrows.meta}
\fi

\ifdefined\isarxiv
  \definecolor{BrickRed}{rgb}{0.8,0.25,0.33}
  \definecolor{ForestGreen}{rgb}{0.1333,0.5451,0.1333}
  \usepackage{hyperref}
  \hypersetup{
    colorlinks=true,
    linkcolor=BrickRed,
    citecolor=ForestGreen,
  }
  \usepackage[capitalize, noabbrev, sort, nameinlink]{cleveref}
\else
  \definecolor{MyLinkColor}{rgb}{0.1, 0.4, 0.75}
  \definecolor{MyCiteColor}{rgb}{0.7, 0.25, 0.2}
  \definecolor{MyUrlColor}{rgb}{0.2, 0.5, 0.5}
  \usepackage{hyperref}
  \hypersetup{
    colorlinks=true,
    linkcolor=MyLinkColor,
    citecolor=MyCiteColor,
    urlcolor=MyUrlColor,
    breaklinks=true,
  }
  \usepackage[nameinlink, noabbrev, capitalise, sort&compress]{cleveref}
\fi

\allowdisplaybreaks

\usepackage{aliascnt}

\theoremstyle{plain}
\newtheorem{theorem}{Theorem}[section]

\newaliascnt{lemma}{theorem}

\aliascntresetthe{lemma}

\newaliascnt{proposition}{theorem}
\newtheorem{proposition}[proposition]{Proposition}
\aliascntresetthe{proposition}

\newaliascnt{corollary}{theorem}

\aliascntresetthe{corollary}

\newaliascnt{fact}{theorem}

\aliascntresetthe{fact}

\theoremstyle{definition}
\newaliascnt{definition}{theorem}
\newtheorem{definition}[definition]{Definition}
\aliascntresetthe{definition}

\newaliascnt{example}{theorem}
\newtheorem{example}[example]{Example}
\aliascntresetthe{example}

\newaliascnt{remark}{theorem}
\newtheorem{remark}[remark]{Remark}
\aliascntresetthe{remark}

\newtheorem*{claim}{Claim}

\usepackage{thmtools}
\usepackage{thm-restate}

\newcounter{informalthm}
\renewcommand{\theinformalthm}{\arabic{informalthm}}

\crefname{theorem}{Theorem}{Theorems}
\Crefname{theorem}{Theorem}{Theorems}
\crefname{lemma}{Lemma}{Lemmas}
\Crefname{lemma}{Lemma}{Lemmas}
\crefname{proposition}{Proposition}{Propositions}
\Crefname{proposition}{Proposition}{Propositions}
\crefname{corollary}{Corollary}{Corollaries}
\Crefname{corollary}{Corollary}{Corollaries}
\crefname{definition}{Definition}{Definitions}
\Crefname{definition}{Definition}{Definitions}
\crefname{example}{Example}{Examples}
\Crefname{example}{Example}{Examples}
\crefname{remark}{Remark}{Remarks}
\Crefname{remark}{Remark}{Remarks}
\crefname{fact}{Fact}{Facts}
\Crefname{fact}{Fact}{Facts}
\crefname{section}{Section}{Sections}
\Crefname{section}{Section}{Sections}
\crefname{appendix}{Appendix}{Appendices}
\Crefname{appendix}{Appendix}{Appendices}
\newcommand{\X}{\mathcal X}
\newcommand{\Hyp}{\mathcal H}
\newcommand{\F}{\mathcal F}

\newcommand{\Txt}{\mathsf{Txt}}
\newcommand{\Inff}{\mathsf{Inf}}
\newcommand{\Ctr}{\mathsf{Ctr}}
\newcommand{\Id}{\mathrm{Id}}
\newcommand{\Gen}{\mathrm{Gen}}

\newcommand{\Seen}{\mathrm{Seen}}
\newcommand{\Safe}{\mathrm{Safe}}
\newcommand{\Nat}{\mathbb N}
\newcommand{\supp}{\operatorname{supp}}
\newcommand{\one}{\mathbf 1}

\newcommand{\Cut}{\Delta}
\newcommand{\GammaC}{\Gamma}
\newcommand{\HDelta}{\Hyp_{\Cut}}
\newcommand{\V}{\operatorname{V}}
\newcommand{\Conf}{\mathfrak C}
\newcommand{\viol}{\operatorname{viol}}
\newcommand{\CDelta}{\mathrm C_{\Cut}}

\begin{document}

\ifdefined\isarxiv
  \date{}
  \title{Contrastive Identification and Generation 
        in the Limit}
\author{%
  Xiaoyu Li\textsuperscript{1}\thanks{\texttt{xiaoyu.li2@unsw.edu.au}}
  \qquad
  Andi Han\textsuperscript{2}\thanks{\texttt{andi.han@sydney.edu.au}}
  \qquad
    Jiaojiao Jiang\textsuperscript{1}\thanks{\texttt{jiaojiao.jiang@unsw.edu.au}}
  \qquad
  Junbin Gao\textsuperscript{2}\thanks{\texttt{junbin.gao@sydney.edu.au}}
  \\[0.8em]
  \textsuperscript{1}University of New South Wales
  \quad\quad
  \textsuperscript{2}University of Sydney
}
\else
  \title{Contrastive Identification and Generation \\
        in the Limit}
\fi

\ifdefined\isarxiv
  \maketitle
  \begin{abstract}
    In the classical \emph{identification in the limit} model of \citeauthor{gold67} [Inf. Control 1967], a stream of positive examples is presented round by round, and the learner must eventually recover the target hypothesis.
Recently, \citeauthor{km24} [NeurIPS 2024] introduced \emph{generation in the limit}, where the learner instead must eventually output novel elements of the target's support.
Both lines of work focus on positive-only or fully labeled data. Yet many natural supervision signals are inherently relational rather than singleton: comparative experiments, A/B tests, side-by-side judgments, and similarity–dissimilarity annotations produce observations that encode relationships between examples rather than labels of individual ones.
This motivates us to initiate the learning-theoretic study of \emph{contrastive identification and generation in the limit}, where the learner observes a \emph{contrastive presentation} of data: a stream of unordered pairs $\{x,y\}$ satisfying $h(x)\ne h(y)$ for an unknown target binary hypothesis $h$, but which element is positive is hidden from the learner.
We first present three results in the noiseless setting: an exact characterization of contrastive identifiable classes (a one-line geometric refinement of Angluin's tell-tale condition [\citeauthor{angluin80}, Inf. Control 1980]), a combinatorial dimension called \emph{contrastive closure dimension} (a contrasitive analogue of the closure dimension in \citeauthor{lrt25} [COLT 2025]) and exactly characterizing uniform contrastive generation with tight sample complexity, and a strict hierarchy in which contrastive generation and text identification are mutually incomparable.
We then prove a sharp \emph{reversal} under finite adversarial corruption: there exist classes identifiable from contrastive pairs under any finite corruption budget by a single budget-independent algorithm, yet not identifiable from positive examples under even one corrupted observation.
The unifying technical object is the \emph{common crossing graph}, which encodes pairwise ambiguity, family-level generation obstructions, and corruption defects in a single coverage-and-incidence language.

  \end{abstract}
\else
  \maketitle
  \begin{abstract}
    
  \end{abstract}
\fi

\crefname{appendix}{Appendix}{Appendices}
\Crefname{appendix}{Appendix}{Appendices}

\section{Introduction}
\label{sec:introduction}

Identification in the limit, the foundational model introduced by~\citet{gold67}, asks how a learner can recover an unknown target hypothesis $h$ drawn from a known class $\Hyp$ by observing an infinite stream of examples and stabilizing its guesses on the truth.
With a fully labeled stream (an \emph{informant}), every countable class is identifiable; with only positive examples (a \emph{text}), even simple classes become unlearnable.
\citet{angluin80}'s celebrated tell-tale theorem characterized exactly which classes are identifiable from positive data: each hypothesis must be distinguished from its proper sub-hypotheses by a finite ``tell-tale'' subset of positives, a structural condition that has anchored inductive inference for four decades~\citep{lzz08}.

Recently, \citet{km24} initiated the parallel study of \emph{generation} in the limit: instead of naming the target, the learner must eventually output novel positives of $h$ not yet seen, and on countable classes with infinite supports they showed this is always possible from a text, in stark contrast to identification.
\citet{lrt25} reformulated the model in learning-theoretic notation and sharpened the picture by introducing a combinatorial \emph{closure dimension} that exactly characterizes \emph{uniform} generation, where the learner must succeed within a bounded number of rounds.
Subsequent work has refined this paradigm along three main axes: refined criteria (density, breadth, mode collapse, hallucination), robustness (noise, corruption, replay), and structural extensions (representative, metric, agnostic, safe, and union-closed variants).
\Cref{sec:related} reviews this body of work.

A common thread runs through this body of work: the learner observes a stream of \emph{positive examples} of the unknown target.
But many natural data sources are inherently \emph{relational} rather than singleton.
Comparative experiments, A/B tests, side-by-side judgments, and similarity-dissimilarity annotations all produce observations that encode relationships between examples rather than labels of individual ones.
This raises a basic question: what can a learner accomplish in the limit when its only information is that two examples disagree under the target, with no indication of which is positive?

\textbf{Contrastive identification and generation in the limit.}
We introduce \emph{contrastive identification and generation in the limit}\footnote{We use ``contrastive'' in the pair-level data sense of similarity-dissimilarity (Sim-Conf) learning~\citep{bns18}, \emph{not} in the self-supervised representation-learning sense (SimCLR, InfoNCE).}, where the unknown target $h:\X\to\{0,1\}$ is a binary hypothesis with positive set (or \emph{support}) $\supp(h):=\{x\in\X:h(x)=1\}$, and the learner observes a \emph{contrastive presentation} of data: at each round, an unordered pair $\{x,y\}$ of examples that disagree under $h$, i.e., $h(x)\ne h(y)$, but with no information about which endpoint is positive.
Over time, the contrastive presentation covers every positive (each appears as an endpoint of some pair) and reveals the local structure of the boundary between positives and negatives, yet never explicitly labels a single point.
This setting is different from Gold's text and informant models since each pair carries an XOR constraint between its endpoints, yet hides the labels themselves.

It is natural to read a contrastive presentation \emph{geometrically}: take the example space as the vertex set of a graph and each observed pair as an edge, so that the disagreement condition forces every edge to cross the unknown cut $(\supp(h),\X\setminus\supp(h))$ separating positives from negatives.
A contrastive presentation is therefore, equivalently, a stream of crossing edges of an unknown bipartition, from which the learner must extract structure without any individual endpoint being marked.
The central technical object that emerges from this lens is the \emph{common crossing graph} of two hypotheses: the pairs that cross both hypotheses' cuts simultaneously, hence look like valid observations under either as the target.
This graph captures pairwise ambiguity, family-level generation obstructions, and corruption defects in a single coverage-and-incidence language, and controls all three of our main learnability questions at three scales: pairs for identification, finite families for generation, and infinite defect sets for robustness.


\subsection{Our main results}
\label{subsec:contributions}
Throughout the paper we work with countable classes over a countably infinite example space and write $\Ctr\Id$, $\Ctr\Gen$ for the families admitting an identifier or a generator in the limit from contrastive presentations, and $\Txt\Id$, $\Txt\Gen$ for the corresponding text-stream families. Formal definitions are deferred to \Cref{sec:problem}.
\paragraph{(1) Exact characterization of contrastive identification.}
We give an exact combinatorial condition for when a hypothesis class admits an identifier in the limit from contrastive presentations: it must be text-identifiable in the sense of Angluin's tell-tale theorem, and additionally every two incomparable hypotheses must form an \emph{overlapping cover}, meaning their supports intersect and together cover the entire example space (\Cref{thm:ctrid-characterization}).
The overlapping-cover requirement is the only obstruction  contrastive data introduce beyond positive-only text data.

\paragraph{(2) A combinatorial dimension for contrastive generation.}
We introduce a closure-style combinatorial dimension that exactly characterizes which hypothesis classes admit a uniform contrastive generator: the edge-set analogue of the closure dimension of \citet{lrt25} for positive-data generation, measuring how long pair constraints can keep an adversary from forcing a novel target element.
Finiteness of this dimension is both necessary and sufficient, with tight sample complexity equal to the dimension plus one (\Cref{thm:uniform-contrastive-generation}).
A non-uniform variant follows by chain decomposition (\Cref{thm:nonuniform-contrastive-generation}).

\paragraph{(3) The clean diamond hierarchy.}
\begin{wrapfigure}{r}{0.34\textwidth}
\vspace{-0.5em}
\centering
\setlength{\abovecaptionskip}{2pt}
\setlength{\belowcaptionskip}{0pt}
\begin{tikzpicture}[
  every node/.style={font=\footnotesize},
  classnode/.style={draw, rounded corners=2.5pt, fill=gray!10, minimum width=34pt, minimum height=18pt, inner sep=2pt},
  inclusion/.style={-{Latex[length=2mm]}, line width=0.6pt},
  parallel/.style={line width=0.5pt, dash pattern=on 3pt off 2pt},
  symblbl/.style={sloped, above, font=\scriptsize, inner sep=1pt}
]
\node[classnode] (top)   at (0, 1.85)   {$\Txt\Gen$};
\node[classnode] (left)  at (-1.95, 0)  {$\Ctr\Gen$};
\node[classnode] (right) at (1.95, 0)   {$\Txt\Id$};
\node[classnode] (bot)   at (0, -1.85)  {$\Ctr\Id$};

\draw[inclusion] (left)  -- node[symblbl] {$\subsetneq$} (top);
\draw[inclusion] (right) -- node[symblbl] {$\supsetneq$} (top);
\draw[inclusion] (bot)   -- node[symblbl] {$\supsetneq$} (left);
\draw[inclusion] (bot)   -- node[symblbl] {$\subsetneq$} (right);

\draw[parallel] (left.east) -- node[above, font=\scriptsize] {$\parallel$} (right.west);
\end{tikzpicture}
\caption{Diamond hierarchy on countable UUS classes. Arrows denote strict containment ($\subsetneq$), oriented from smaller to larger; the dashed line denotes incomparability ($\parallel$).}
\label{fig:diamond-hierarchy}
\vspace{-1.8em}
\end{wrapfigure}
On countable classes with infinite supports, the four families $\Ctr\Id, \Txt\Id, \Ctr\Gen, \Txt\Gen$ form a strict \emph{diamond} rather than a chain (\Cref{fig:diamond-hierarchy}):
$\Ctr\Id$ is strictly weaker than both $\Ctr\Gen$ and $\Txt\Id$, both of which are strictly weaker than $\Txt\Gen$, but $\Ctr\Gen$ and $\Txt\Id$ are mutually incomparable (\Cref{thm:hierarchy-chain}, \Cref{thm:hierarchy-incomparability}).
The mutual incomparability is witnessed by two natural classes: pairs of hypotheses with disjoint supports (text-identifiable from the first positive, but blocked from contrastive generation by a finite-intersection family obstruction), and punctured-support classes obtained by removing one element from a fixed infinite set (contrastively generatable via an eventually-correct enumeration, but lacking a finite tell-tale for text identification).

\paragraph{(4) Robustness reversal under corruption.}
The inclusion $\Ctr\Id\subsetneq\Txt\Id$ reverses under finite adversarial corruption, where an adversary plants a bounded number of arbitrary observations into the stream.
We isolate the \emph{defect number}, an invariant counting the minimum forced wrong-cut violations in any clean contrastive presentation (\Cref{prop:defect-number}).
When this number is infinite between every pair of hypotheses, identification becomes possible by violation counting under any finite corruption.
The \emph{co-singleton class} (each hypothesis labels every example positive except a single ``hole'') realizes this mechanism: it is identifiable under any finite corruption budget by a single budget-independent \emph{absence-count} algorithm, yet fails text identification at one corruption (\Cref{thm:co-singleton-fin-ctrid}).
The construction generalizes to mutual incomparability of $k$-corrupted contrastive and text identification for every $k\ge 1$ (\Cref{thm:corrupted-incomparability}).

\subsection{Technical overview}
\label{subsec:overview}

Our results are driven by a single piece of geometry, the \emph{common crossing graph} $\GammaC(h,g):=\Cut(h)\cap\Cut(g)$, where $\Cut(h):=\{\{x,y\}\in[\X]^2:h(x)\ne h(y)\}$ is the set of pairs crossing $h$'s cut and $\V(E)$ is the vertex set of an edge set $E$.
We apply $\GammaC$ at three scales (pair, finite family, infinite defect set).

\paragraph{Identification: coverage on $\GammaC$.}
Pairwise eliminability has a one-line graph-theoretic translation: $g$ is \emph{not} eliminable from $h$ iff $\supp(h)\subseteq\V(\GammaC(h,g))$ (\Cref{prop:common-crossing-coverage}).
Case analysis on the four-region partition of $\X$ by membership in $\supp(h),\supp(g)$ identifies three non-eliminable regimes (superset, disjoint, non-covering).
The last two are exactly the \emph{new} obstructions contrastive data introduce beyond text, driving the exact characterization (\Cref{thm:ctrid-characterization}).

\paragraph{Generation: edge-induced closure replaces positive-data closure.}
Contrastive data confirm only \emph{relative parities} between endpoints, so we replace the positive version space by the edge-induced version space $\HDelta(E)$ and the closure by $\langle E\rangle^{\Cut}_\Hyp$ (\Cref{def:edge-induced}).
The contrastive closure dimension $\CDelta$ (\Cref{def:hollow}) is the contrastive analogue of~\citet{lrt25}'s closure dimension.
Sufficiency is a one-line coverage argument, necessity uses attainment of the supremum on a finite witness, and standard threshold-and-defer extends both to non-uniform classes.

\paragraph{Robustness: defects as forced cut violations.}
Corrupted contrastive data are $k$-close to a clean crossing-edge stream, while corrupted positive data have no such internal structure.
The positive-side defect set $D^+_{h\to g}:=\supp(h)\setminus\V(\GammaC(h,g))$ lower-bounds the number of pairs violating $g$'s cut in any clean valid presentation of $h$ (\Cref{prop:defect-number}).
When $\kappa(h\to g):=|D^+_{h\to g}|=\infty$, no finite corruption masks all forced violations.
For the co-singleton class $\GammaC(h_s,h_t)$ is the single edge $\{s,t\}$, so $\kappa=\infty$, and the absence-count algorithm (\Cref{alg:absence-count}) exploits that the unique negative $s$ is incident to every honest pair, yielding $\mathrm{Fin}\textnormal{-}\Ctr\Id$ identifiability (\Cref{thm:co-singleton-fin-ctrid}).

\paragraph{Organization.} \Cref{sec:related} surveys related work, \Cref{sec:problem} introduces the preliminaries, \Cref{sec:identification,sec:generation,sec:robust} prove the four contributions above, and \Cref{sec:discussion} concludes the findings. Full proofs, discussion, and additional related work are deferred to the appendix.

\section{Related Work}
\label{sec:related}

\paragraph{Identification in the limit.} The paradigm originates with~\citet{gold67}, with~\citet{angluin80}'s tell-tale theorem giving the canonical positive-data characterization; see~\citet{lzz08} for indexed-family results.
Robust and noise-tolerant variants have been studied since the 1980s; recent work~\citep{pf25} characterizes limit-learnability of recursive functions when the learner sees evaluations on every domain point (a labeled, two-sided setting).
\citet{psv26} augment Gold's model with computational traces of the accepting machine and obtain identifiability across the Chomsky hierarchy with varying corruption tolerance, which is a complementary mechanism for circumventing Gold's negative results.

\paragraph{Generation in the limit.} \citet{km24} introduced generation in the limit and proved its universality on countable UUS classes.
\citet{lrt25} reformulated the model in learning-theoretic notation and introduced the closure dimension that exactly characterizes uniform generation.
Subsequent work has examined breadth, density, noise trade-offs, hallucination detection, generation in continuous spaces, agnostic generation, safe generation, differentially private generation, and union closure properties~\citep{kmv24b,rr25,bpz25,kw25a,cp24,kmsv25,lrt26,hp26,akk26,lhjg26,mvyz26,hkmv25}.
Our contrastive closure dimension is a direct edge-set analogue of the closure dimension of~\citet{lrt25}, recovering the closure formalism with finite positive samples replaced by finite sets of pair constraints.

\paragraph{Learning from pair signals.} The data model studied here, pairs known to disagree but with the labels stripped, is a special case of pair-level supervision studied in a long line of weakly supervised learning, including similarity-dissimilarity (Sim-Conf) learning~\citep{bns18}, learning from positive and unlabeled data~\citep{en08,bd20}, complementary-label learning~\citep{inhs17}, learning from label proportions~\citep{qsmcl09}, and multiple-instance learning with the exactly-one-positive constraint~\citep{ccgg18}.
Those literatures study finite-sample PAC and risk minimization with stochastic noise; we study the asymptotic identification/generation-in-the-limit regime, in which structural combinatorics (rather than concentration of measure) governs feasibility, and adversarial corruption rather than i.i.d.\ label noise drives the robustness analysis.

\section{Preliminaries}
\label{sec:problem}

\paragraph{Setting.} We adopt the learning-theoretic notations from \citet{lrt25}.
Throughout, $\X$ is a countably infinite example space and $\Hyp\subseteq\{0,1\}^{\X}$ is a countable class.
The support of $h\in\Hyp$ is $\supp(h):=\{x\in\X:h(x)=1\}$.
A hypothesis is \emph{proper nontrivial} if $\varnothing\subsetneq\supp(h)\subsetneq\X$.
We work extensionally; $\X=\{u_0,u_1,\ldots\}$ is fixed once for ``least element'' constructions.
We restrict to proper nontrivial targets for contrastive presentation (otherwise no XOR pair exists), and to the standard infiniteness condition for generation:

\begin{definition}[Uniformly unbounded support~\citep{lrt25}]
$\Hyp$ satisfies the \emph{uniformly unbounded support} (UUS) property if $|\supp(h)|=\infty$ for every $h\in\Hyp$.
\end{definition}

\begin{definition}[Positive-data closure and version spaces~\citep{lrt25}]
For $x_{1:n}=(x_1,\ldots,x_n)$, we define the version space $\Hyp(x_{1:n}):=\{g\in\Hyp:\{x_1,\ldots,x_n\}\subseteq\supp(g)\}$ and the positive-data closure $\langle x_{1:n}\rangle_{\Hyp}:=\bigcap_{g\in\Hyp(x_{1:n})}\supp(g)$ when nonempty (else $\bot$).
\end{definition}
\begin{definition}[Presentation modes]
\label{def:presentations}
Let $[\X]^2$ be the set of two-element subsets of $\X$, and let $h\in\Hyp$.
A \emph{text presentation} $T=(x_t)_{t\ge1}$ for $h$ has $x_t\in\supp(h)$ and covers $\supp(h)$; write $\Seen_n(T):=\{x_t\}_{t\le n}$.
An \emph{informant presentation} $I=((x_t,h(x_t)))_{t\ge1}$ for $h$ covers all of $\X$; write $\Seen_n(I):=\{x_t\}_{t\le n}$.
A \emph{contrastive presentation} $P=(p_t)_{t\ge1}\subseteq[\X]^2$ for proper nontrivial $h$ satisfies (i) $\sum_{x\in p_t}h(x)=1$ (XOR condition), and (ii) $\supp(h)\subseteq\bigcup_t p_t$ (positive-side coverage);
pairs are observed as unordered sets and may repeat, and we write $\Seen_n(P):=\bigcup_{t\le n}p_t$ and $\Seen(P):=\bigcup_{t\ge 1} p_t$.
\end{definition}

For $\mathsf X\in\{\Txt,\Inff,\Ctr\}$ and any presentation $Q$, let $Q_{\le n}$ denote the length-$n$ prefix.

\begin{definition}[Identification and generation in the limit~\citep{gold67,km24}]
\label{def:learning-criteria}
An \emph{$\mathsf X$-identifier} is a map from valid $\mathsf X$-prefixes to $\Hyp$; it \emph{identifies} $h$ if for every valid presentation $Q$ of $h$ there exists $N$ with $I(Q_{\le n})=h$ for all $n\ge N$.
An \emph{$\mathsf X$-generator} is a map from valid $\mathsf X$-prefixes to $\X$; it \emph{generates} $h$ if for every valid presentation $Q$ of $h$ there exists $N$ such that its outputs $\hat x_n=G(Q_{\le n})$ satisfy $\hat x_n\notin\Seen_n(Q)$ and $\hat x_n\in\supp(h)$ for all $n \geq N$.
We write $\Hyp\in\mathsf X\Id$ (resp.\ $\mathsf X\Gen$) when some identifier (resp.\ generator) succeeds on every $h\in\Hyp$.
\end{definition}

We will state several foundational results from prior work.

\begin{theorem}[Gold's positive result for informant data~\citep{gold67}]
\label{thm:gold-informant}
Every countable class $\Hyp\subseteq\{0,1\}^{\X}$ lies in $\Inff\Id$.
\end{theorem}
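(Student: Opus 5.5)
The plan is the classical identification-by-enumeration argument. Fix an enumeration $\Hyp=\{h_1,h_2,\ldots\}$; this is possible since $\Hyp$ is countable, and repetitions are harmless. The identifier $I$ receives a prefix $I_{\le n}=((x_t,h(x_t)))_{t\le n}$. It outputs $h_{i_n}$, where $i_n$ is the least index $i$ such that $h_i(x_t)=y_t$ for all $t\le n$, i.e., $h_i$ is consistent with every labeled example seen so far. Such an index exists on any valid prefix of a presentation of some $h\in\Hyp$, because $h$ itself is consistent.

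For correctness, fix a target $h\in\Hyp$ and a valid informant presentation $I$ of $h$, and let $j$ be the least index with $h_j=h$. Then $h_j$ is consistent at every round, so $i_n\le j$ for all $n$. For each $i<j$, the hypothesis $h_i\neq h_j$ as functions on $\X$, so there is a point $z_i\in\X$ with $h_i(z_i)\neq h(z_i)$. The informant covers all of $\X$, so $z_i$ appears as some $x_{t_i}$ with label $h(z_i)$. From round $t_i$ onward, $h_i$ is therefore inconsistent forever, since consistency is monotone under prefix extension. Set $N:=\max_{i<j}t_i$, with $N=1$ if $j=1$. For $n\ge N$, every index below $j$ has been eliminated, so $i_n=j$ and $I(I_{\le n})=h$.

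The only step needing care is distinguishing extensional equality from index equality. Choosing $j$ as the least index of $h$ ensures that every earlier index is a genuinely different function and hence eventually refuted. The finiteness of the set $\{i<j\}$ is what gives a finite stabilization time $N$. No computability is required, since the paper works extensionally.
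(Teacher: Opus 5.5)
Your identification-by-enumeration proof is correct and complete. Taking $j$ as the least index of the target makes every earlier $h_i$ extensionally distinct from it, so each is refuted at a finite time because the informant covers $\X$, and the finiteness of $\{i<j\}$ yields the stabilization time. The paper gives no proof of this statement and simply cites Gold; your argument is the classical one behind that result.
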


\begin{theorem}[Angluin's tell-tale theorem~\citep{angluin80}]
\label{thm:angluin}
$\Hyp\in\Txt\Id$ iff for every $g\in\Hyp$ there is a finite ``tell-tale'' set $T_g\subseteq\supp(g)$ such that no $f\in\Hyp$ with $\supp(f)\subsetneq\supp(g)$ contains $T_g$.
\end{theorem}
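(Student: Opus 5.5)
The statement is Angluin's theorem. I plan to prove both directions in the standard way. Throughout I fix the enumeration $\Hyp=\{h_1,h_2,\ldots\}$, using countability.

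\textbf{Sufficiency.} Assume every $g$ has a tell-tale $T_g$. On a text prefix $x_{1:n}$ with $S_n=\Seen_n$, the learner outputs the least index $i\le n$ satisfying two conditions: (a) $S_n\subseteq\supp(h_i)$, and (b) $T_{h_i}\cap\{u_0,\ldots,u_n\}\subseteq S_n$. It outputs a default hypothesis if no such $i$ exists. Condition (b) is a finite-stage approximation to ``$T_{h_i}\subseteq S_n$.'' I will use it as written, or else simply require $T_{h_i}\subseteq S_n$ directly, since we work extensionally.

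Now fix a target $h$, a text $T$ for $h$, and let $j$ be the least index with $\supp(h_j)=\supp(h)$. Then:
\begin{itemize}
\item Since $T_{h_j}$ is finite and $T$ covers $\supp(h)$, eventually $T_{h_j}\subseteq S_n$, so $j$ is eventually a candidate.
\item Each $i<j$ with $\supp(h)\not\subseteq\supp(h_i)$ is eventually refuted by some $x\in\supp(h)\setminus\supp(h_i)$ appearing in $S_n$.
\item Each $i<j$ with $\supp(h)\subsetneq\supp(h_i)$ can satisfy (b) only if $T_{h_i}\subseteq S_n\subseteq\supp(h)$. But then $h$ would be a proper sub-hypothesis of $h_i$ containing $T_{h_i}$, which contradicts the tell-tale property. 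So these indices are never candidates once (b) is tested on all of $T_{h_i}$.
\item Each $i<j$ with $\supp(h_i)=\supp(h)$ is impossible by the minimality of $j$.
\end{itemize}
Only finitely many indices lie below $j$, so the output stabilizes on $h_j$, which equals $h$ extensionally.

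\textbf{Necessity.} This is the main obstacle: I need the locking-sequence argument of Blum and Blum. Suppose $I$ identifies $\Hyp$ from text, and fix $g$. I first claim there is a finite sequence $\sigma$ of elements of $\supp(g)$ with two properties: $I(\sigma)=g$, and $I(\sigma\tau)=g$ for every finite extension $\tau$ drawn from $\supp(g)$.

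To prove the claim, suppose no such $\sigma$ exists. Then I build a text for $g$ inductively. Enumerate $\supp(g)=\{y_1,y_2,\ldots\}$. At stage $k$, append $y_k$, which guarantees coverage. Since the current prefix is not a locking sequence, I then append an extension $\tau_k\subseteq\supp(g)$ on which $I$ outputs something other than $g$. The resulting infinite sequence is a text for $g$ on which $I$ changes away from $g$ infinitely often, so $I$ fails to identify $g$, a contradiction. One small fix is needed: if $I(\sigma)\ne g$ already, the extension can be taken empty.

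Finally I set $T_g:=$ the range of $\sigma$, which is a finite subset of $\supp(g)$. Suppose some $f$ had $T_g\subseteq\supp(f)\subsetneq\supp(g)$. Extend $\sigma$ by a text for $f$; since $T_g\subseteq\supp(f)$, the result is still a text for $f$. Every prefix of this text is $\sigma\tau$ with $\tau$ drawn from $\supp(f)\subseteq\supp(g)$, so by the locking property $I$ outputs $g\neq f$ forever. This contradicts identification of $f$, so $T_g$ is a tell-tale.
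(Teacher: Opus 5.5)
Your proof is correct. The paper gives no proof of this statement; it imports it from Angluin's 1980 paper as a black box, so there is no in-paper argument to compare against line by line.

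Your sufficiency learner is the text analogue of the ``least eligible index'' identifier the paper builds in the (iii)$\Rightarrow$(i) step of \Cref{thm:ctrid-characterization}, and your superset bullet corresponds to its Case~3. The comparison is instructive. From text, $T_{h_i}\subseteq\Seen_n\subseteq\supp(h)$ is immediate, because every seen point is a certified positive. Contrastively, a seen point may be negative, so the paper has to use the XOR structure to turn a point of $T_{h_j}\setminus\supp(h)$ into a pair inconsistent with $h_j$.

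Your necessity direction is the Blum--Blum locking-sequence argument. It proves exactly the non-effective version stated here (arbitrary countable $\Hyp$, arbitrary maps as identifiers), which is all the paper uses. Angluin's original theorem is about computable learners on indexed families and asks for uniformly r.e.\ tell-tales. That needs an effective staged search for the locking sequence, which a bare existence argument does not supply and which is not needed for this statement.

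A few housekeeping points:
\begin{itemize}
\item If $\supp(g)=\varnothing$, take $T_g=\varnothing$; your construction is unavailable there because no text for $g$ exists.
\item Take $\sigma$ nonempty. Your construction already gives this, since each stage first appends $y_k$. This guarantees that every $f$ with $T_g\subseteq\supp(f)$ actually has a text, which matters if $\Hyp$ contains an empty-support hypothesis.
\item Enumerate a finite $\supp(g)$ with repetitions.
\item Your finite-stage version of condition (b) also works: a superset index $i<j$ is excluded once $T_{h_i}\subseteq\{u_0,\ldots,u_n\}$, and there are only finitely many such $i$.
\end{itemize}
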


\begin{theorem}[Universality of text generation~\citep{km24}]
\label{thm:KM}
Every countable $\Hyp\subseteq\{0,1\}^\X$ satisfying UUS lies in $\Txt\Gen$.
\end{theorem}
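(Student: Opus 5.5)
Universality of text generation (\Cref{thm:KM}) is quoted from \citet{km24}, so the plan is to reconstruct its standard proof in the notation of \citet{lrt25}: a priority-based enumeration over the countable class $\Hyp=\{h_1,h_2,\ldots\}$. We fix this enumeration once. Given a text prefix $T_{\le n}$ with $S_n:=\Seen_n(T)$, we call $h_i$ \emph{consistent at time $n$} if $S_n\subseteq\supp(h_i)$. The target $h=h_z$ (least index) is consistent at every $n$, and every $h_i$ with $i<z$ and $\supp(h_i)\not\supseteq\supp(h_z)$ becomes inconsistent after some finite time, because some element of $\supp(h_z)\setminus\supp(h_i)$ eventually appears in $T$.

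\textbf{Key step (critical hypotheses).} Following \citet{km24}, we call a consistent $h_i$ \emph{critical at time $n$} (with $i\le n$) if $\supp(h_i)\subseteq\supp(h_j)$ for every consistent $h_j$ with $j<i$. The consistent hypotheses that are critical form a chain under inclusion, $\supp(h_{i_1})\supseteq\supp(h_{i_2})\supseteq\cdots$. We then show that from some time $N$ on, $h_z$ is critical. Each $h_j$ with $j<z$ either contains $\supp(h_z)$, and so does not obstruct criticality, or is eventually inconsistent. There are finitely many such $j$, so a single threshold $N$ handles all of them.

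\textbf{Generation step.} At time $n$ the generator takes the largest index $i\le n$ such that $h_i$ is critical and outputs an element of $\supp(h_i)\setminus S_n$. This set is nonempty by UUS, since $\supp(h_i)$ is infinite and $S_n$ is finite, and we pick its least element with respect to the fixed ordering of $\X$. For $n\ge N$ the index $z$ is critical and $i\ge z$. Criticality gives $\supp(h_i)\subseteq\supp(h_z)$: since $h_z$ is consistent with $z<i$, the definition applies, and for $i=z$ this is trivial. Hence the output lies in $\supp(h_z)\setminus S_n$, which is exactly the requirement of \Cref{def:learning-criteria}.

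\textbf{Main obstacle.} The chain ordering itself is immediate, because criticality is defined by inclusion into all earlier consistent hypotheses. The real subtlety is the uniform threshold $N$. The argument must use only the finitely many indices below $z$ and must not depend on $i$. This is fine because the containment $\supp(h_i)\subseteq\supp(h_z)$ follows directly from the definition of criticality rather than from any convergence property. No computability issue arises, since generators are arbitrary maps in this extensional setting.
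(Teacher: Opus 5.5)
Your argument is correct and is the standard critical-hypothesis construction of \citet{km24}: criticality is containment in every earlier consistent hypothesis, the target becomes critical after finitely many lower-indexed hypotheses are ruled out, and the output is an unseen element of the largest-index critical hypothesis. The paper does not prove \Cref{thm:KM}; it imports the result from \citet{km24} by citation, so your reconstruction follows that cited source, and you are right that the extensional setting removes the computability issues the original construction must handle.
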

\section{Identification: Eliminability via Common Crossings}
\label{sec:identification}

This section proves the exact characterization of $\Ctr\Id$ advertised in \Cref{subsec:contributions}.
The argument has two layers: first, a coverage criterion translates pairwise eliminability into a graph-theoretic incidence condition. Second, this geometric reduction combines with Angluin's tell-tale theorem to yield the exact theorem.
\begin{wrapfigure}{r}{0.35\textwidth}
\vspace{0.3em}
\centering
\setlength{\abovecaptionskip}{2pt}
\setlength{\belowcaptionskip}{0pt}
\begin{tikzpicture}[
  every node/.style={font=\footnotesize},
  vert/.style={circle, draw=black!70, fill=white, line width=0.5pt, minimum size=15pt, inner sep=1pt},
  cutHonly/.style={blue!60, line width=0.7pt, dashed, line cap=round},
  cutGonly/.style={red!60, line width=0.7pt, densely dotted, line cap=round},
  gammaC/.style={black!85, line width=1.0pt, line cap=round, shorten >=1.5pt, shorten <=1.5pt},
  ann/.style={font=\scriptsize, inner sep=1pt}
]
\node[vert] (u1) at (0,1.25)   {$u_1$};
\node[vert] (u2) at (1.85,1.25) {$u_2$};
\node[vert] (u3) at (0,0)      {$u_3$};
\node[vert] (u4) at (1.85,0)   {$u_4$};
\node[ann, above=2.5pt of u1] {$h{=}1,g{=}1$};
\node[ann, above=2.5pt of u2] {$h{=}1,g{=}0$};
\node[ann, below=2.5pt of u3] {$h{=}0,g{=}1$};
\node[ann, below=2.5pt of u4] {$h{=}0,g{=}0$};
\draw[cutHonly] (u1) -- (u3);
\draw[cutHonly] (u2) -- (u4);
\draw[cutGonly] (u1) -- (u2);
\draw[cutGonly] (u3) -- (u4);
\draw[gammaC]   (u1) -- (u4);
\draw[gammaC]   (u2) -- (u3);
\end{tikzpicture}
\caption{\small Common crossing graph on a four-vertex example with $\supp(h)=\{u_1,u_2\}$, $\supp(g)=\{u_1,u_3\}$. \emph{Dashed blue}: $\Cut(h)$ only; \emph{dotted red}: $\Cut(g)$ only; \emph{solid dark}: $\GammaC(h,g)$.}
\label{fig:common-crossing}
\vspace{-3em}
\end{wrapfigure}
\paragraph{Notation.} For $h\in\Hyp$, the \emph{crossing-edge set} is $\Cut(h):=\{\{x,y\}\in[\X]^2:h(x)\ne h(y)\}$; a contrastive pair for $h$ is precisely an edge in $\Cut(h)$.
For two hypotheses $h,g$, the \emph{common crossing graph} is $\GammaC(h,g):=\Cut(h)\cap\Cut(g)$, viewed as an undirected graph on $\X$ (\Cref{fig:common-crossing}).
For an edge set $E$, $\V(E)$ denotes the vertices incident to some edge of $E$.
The point of this notation is that pairwise contrastive ambiguity reduces to a coverage question: $g$ survives a presentation for $h$ iff every $h$-positive is incident to a common-crossing edge.

\subsection{Pairwise eliminability as common-crossing coverage}

\begin{definition}[Pairwise eliminability]
\label{def:eliminability}
For distinct proper nontrivial $h,g$, we say $g$ is \emph{eliminable from $h$} if every valid contrastive presentation for $h$ contains a pair outside $\Cut(g)$; equivalently, $g$ is \emph{not} eliminable from $h$ iff there is a valid presentation for $h$ with all pairs in $\GammaC(h,g)$.
\end{definition}

\begin{restatable}[Common-crossing coverage]{proposition}{propCoverage}
\label{prop:common-crossing-coverage}
For distinct proper nontrivial $h, g$, $g$ is not eliminable from $h$ iff $\supp(h)\subseteq\V(\GammaC(h,g))$.
\end{restatable}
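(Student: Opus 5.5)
We prove the two directions separately, working directly from \Cref{def:eliminability}: $g$ is not eliminable from $h$ iff some valid contrastive presentation $P$ for $h$ has every pair in $\GammaC(h,g)$.

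\emph{($\Rightarrow$).} Suppose such a $P=(p_t)_{t\ge1}$ exists, with each $p_t\in\GammaC(h,g)$. Positive-side coverage gives $\supp(h)\subseteq\bigcup_t p_t$. Every endpoint of every $p_t$ lies in $\V(\GammaC(h,g))$ by definition of $\V$. Hence $\supp(h)\subseteq\V(\GammaC(h,g))$.

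\emph{($\Leftarrow$).} Assume $\supp(h)\subseteq\V(\GammaC(h,g))$, and build a presentation from the common crossing graph.
\begin{enumerate}
\item For each $x\in\supp(h)$, choose an edge $e_x\in\GammaC(h,g)$ with $x\in e_x$. We can make this choice canonically as the least edge under a fixed enumeration of $[\X]^2$.
\item The set $\{e_x:x\in\supp(h)\}$ is nonempty because $h$ is proper nontrivial, so $\supp(h)\neq\varnothing$. It is countable because $\X$ is countable.
\item Enumerate these edges as an infinite sequence $(p_t)_{t\ge1}$, repeating edges if the set is finite. Repetition is allowed by \Cref{def:presentations}.
\end{enumerate}
We check that this sequence is a valid presentation for $h$. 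Each $p_t\in\Cut(h)$, so the XOR condition holds. Every $x\in\supp(h)$ lies in $e_x$, which appears in the sequence, so positive-side coverage holds. All pairs lie in $\GammaC(h,g)$, so no pair falls outside $\Cut(g)$, and therefore $g$ is not eliminable from $h$.

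\textbf{Main obstacle.} The only delicate step is confirming that the sequence in item 3 is a legitimate presentation: it must be infinite, and it must be nonempty, which uses $\supp(h)\neq\varnothing$. Both points are handled by repetition and by proper nontriviality.

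Distinctness of $h$ and $g$ is not actually needed in either direction. It matters only because it makes the characterization nontrivial.
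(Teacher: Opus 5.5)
Your proposal is correct and follows essentially the same route as the paper. In the forward direction, positive-side coverage places every positive of $h$ in a common-crossing pair. In the converse, you choose one incident $\GammaC(h,g)$-edge per positive and enumerate these edges, repeating them if there are finitely many, which yields a valid presentation for $h$ lying entirely inside $\GammaC(h,g)$.
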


\begin{proof}
($\Rightarrow$) Positive-side coverage forces every $x\in\supp(h)$ to appear in a common-crossing pair, so $\supp(h)\subseteq\V(\GammaC(h,g))$.
($\Leftarrow$) For each $x\in\supp(h)$, pick a partner $y_x$ with $\{x,y_x\}\in\GammaC(h,g)$; enumerate $\supp(h)$ (or list-and-repeat if finite) and emit the corresponding pairs to obtain a valid presentation for $h$ with all pairs in $\GammaC(h,g)$.
\end{proof}

The four-region form of \Cref{prop:common-crossing-coverage} reveals exactly which support configurations create barriers.

\begin{restatable}[Eliminability geometry]{theorem}{thmCtrEliminability}
\label{thm:ctr-eliminability}
For distinct proper nontrivial $h,g$, partition $\X$ by membership: $A=\supp(h)\cap\supp(g)$, $B=\supp(h)\setminus\supp(g)$, $C=\supp(g)\setminus\supp(h)$, $D=\X\setminus(\supp(h)\cup\supp(g))$.
Then
\begin{align*}
   g\text{ is not eliminable from }h
   \;\iff\;
   (A\ne\varnothing\Rightarrow D\ne\varnothing)\;\wedge\;
   (B\ne\varnothing\Rightarrow C\ne\varnothing).
\end{align*}
Equivalently, $g$ is not eliminable from $h$ in exactly three regimes: \textnormal{(N1)} $\supp(h)\subsetneq\supp(g)$ (\emph{superset}); \textnormal{(N2)} incomparable and disjoint (\emph{disjoint}); \textnormal{(N3)} incomparable, intersecting, and non-covering (\emph{non-covering}).
\end{restatable}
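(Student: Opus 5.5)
By \Cref{prop:common-crossing-coverage}, it suffices to show that $\supp(h)\subseteq\V(\GammaC(h,g))$ holds exactly when the two implications hold. The key step is to compute $\GammaC(h,g)$ in terms of the four regions. A pair $\{x,y\}$ lies in $\Cut(h)\cap\Cut(g)$ iff $h(x)\ne h(y)$ and $g(x)\ne g(y)$. Equivalently, the label vectors $(h(x),g(x))$ and $(h(y),g(y))$ differ in both coordinates, i.e.\ they are antipodal in $\{0,1\}^2$. The region labels are $A=(1,1)$, $B=(1,0)$, $C=(0,1)$ and $D=(0,0)$. So the common crossings are exactly the edges between $A$ and $D$ and the edges between $B$ and $C$, and $\GammaC(h,g)$ is the union of the complete bipartite graphs $K_{A,D}$ and $K_{B,C}$. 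This matches \Cref{fig:common-crossing}.

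From this description, $\V(\GammaC(h,g))$ can be read off directly. A vertex of $A$ is covered iff $D\ne\varnothing$, since then every $a\in A$ can be paired with any $d\in D$. Likewise a vertex of $B$ is covered iff $C\ne\varnothing$. Since $\supp(h)=A\sqcup B$, the condition $\supp(h)\subseteq\V(\GammaC)$ says every point of $A$ and every point of $B$ is covered. This holds iff $A\ne\varnothing\Rightarrow D\ne\varnothing$ and $B\ne\varnothing\Rightarrow C\ne\varnothing$, which proves the displayed equivalence.

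For the three-regime restatement, I would split on whether $B$ is empty and translate region emptiness into support relations. Note that $B=\varnothing$ iff $\supp(h)\subseteq\supp(g)$, and $C=\varnothing$ iff $\supp(g)\subseteq\supp(h)$. Also $A=\varnothing$ iff the supports are disjoint, and $D=\varnothing$ iff $\supp(h)\cup\supp(g)=\X$.
\begin{itemize}
\item If $B=\varnothing$, then $\supp(h)\subseteq\supp(g)$, and distinctness gives $\supp(h)\subsetneq\supp(g)$. Here $C\ne\varnothing$ automatically, and $A=\supp(h)\ne\varnothing$ because $h$ is proper nontrivial. So the condition reduces to $D\ne\varnothing$. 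This holds because $\supp(h)\cup\supp(g)=\supp(g)\ne\X$, since $g$ is proper nontrivial. This is regime (N1).
\item If $B\ne\varnothing$, the condition requires $C\ne\varnothing$, so the supports are incomparable. It also requires that either $A=\varnothing$ (disjoint, regime (N2)) or $D\ne\varnothing$ (intersecting and non-covering, regime (N3)).
\end{itemize}
Conversely, each of (N1)--(N3) satisfies both implications, which is checked the same way.

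I expect no real obstacle. The only delicate points are the edge cases, and each is handled by a standing hypothesis. Proper nontriviality of $g$ is what guarantees $D\ne\varnothing$ in the superset case. Distinctness is what excludes $\supp(h)=\supp(g)$; in that case $B=C=\varnothing$ and the condition would hold vacuously. Proper nontriviality of $h$ ensures $A\ne\varnothing$ whenever $B=\varnothing$.
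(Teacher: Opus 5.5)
Your proposal is correct and follows the paper's proof essentially step for step. You reduce via \Cref{prop:common-crossing-coverage}, observe that $A$-vertices need partners in $D$ and $B$-vertices need partners in $C$ (your antipodal-label description $\GammaC(h,g)=K_{A,D}\cup K_{B,C}$ is a compact restatement of this), and then case-split. Your split on whether $B=\varnothing$, rather than on superset/subset/incomparable, is only a reorganization of the paper's case analysis, and it still handles the $\supp(g)\subsetneq\supp(h)$ case correctly through the requirement $C\neq\varnothing$.
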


\begin{proof}[Proof sketch]
For $x\in A$, a pair $\{x,y\}$ lies in $\Cut(h)$ iff $y\notin\supp(h)$, and additionally in $\Cut(g)$ iff $y\notin\supp(g)$; hence $A$-vertices are coverable iff $D\ne\varnothing$.
Symmetrically for $B$.
The named regimes follow by case analysis on the support relation. The full proof is in \Cref{app:proofs-id}.
\end{proof}

\begin{restatable}[Pairwise shared presentation]{lemma}{lemPairwiseShared}
\label{lem:pairwise-shared-presentation}
$h,g\in\Hyp$ admit a common contrastive presentation valid for both targets iff $\supp(h)\cup\supp(g)\subseteq\V(\GammaC(h,g))$.
Mutual non-eliminability implies confusability.
\end{restatable}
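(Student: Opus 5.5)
The plan is to mirror the proof of \Cref{prop:common-crossing-coverage}, with the coverage requirement now applied to both targets simultaneously. A common contrastive presentation $P$ valid for both $h$ and $g$ must satisfy the XOR condition for each target. So every pair $p_t$ lies in $\Cut(h)\cap\Cut(g)=\GammaC(h,g)$, and positive-side coverage must hold for both $\supp(h)$ and $\supp(g)$.

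For ($\Rightarrow$), take such a $P$. Every $x\in\supp(h)\cup\supp(g)$ appears in some $p_t\in\GammaC(h,g)$, hence $x\in\V(\GammaC(h,g))$.

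For ($\Leftarrow$), for each $x\in\supp(h)\cup\supp(g)$ I would choose a partner $y_x$ with $\{x,y_x\}\in\GammaC(h,g)$. I then enumerate the (nonempty, countable) set $\supp(h)\cup\supp(g)$, repeating entries if it is finite so that the stream is infinite, and emit the pairs $\{x,y_x\}$. Each emitted pair lies in $\Cut(h)$ and in $\Cut(g)$, so XOR holds for both targets. Every positive of either target occurs as an endpoint, so coverage holds for both. Proper nontriviality of $h,g$ (as in \Cref{def:eliminability}) ensures that the notion of a contrastive presentation applies.

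For the second sentence, I read ``confusable'' as ``admitting a common presentation'', i.e.\ the condition just characterized. Mutual non-eliminability means $g$ is not eliminable from $h$ and $h$ is not eliminable from $g$. Applying \Cref{prop:common-crossing-coverage} in both directions, and using that $\GammaC$ is symmetric ($\GammaC(h,g)=\GammaC(g,h)$), gives $\supp(h)\subseteq\V(\GammaC(h,g))$ and $\supp(g)\subseteq\V(\GammaC(h,g))$. Their union is the coverage condition, and the first part then yields a common presentation.

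The only delicate step is the bookkeeping in ($\Leftarrow$): the stream must be infinite and must cover both supports at once. This is handled by the list-and-repeat device already used in \Cref{prop:common-crossing-coverage}. Since the union of the two supports is nonempty, the enumeration always exists, so I do not expect a genuine obstacle.
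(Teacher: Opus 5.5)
Your proof is correct and follows the paper's argument essentially step for step. Necessity holds because every pair of a shared presentation lies in $\GammaC(h,g)$ and both supports must be covered; sufficiency comes from choosing a common-crossing partner for each point of $\supp(h)\cup\supp(g)$ and enumerating with list-and-repeat. For the second sentence, the paper reads confusability as ``a deterministic identifier outputs the same hypothesis on the shared presentation whichever target generated it.'' You instead stop at the existence of the shared presentation, which matches the later definition of $\Conf(\Hyp)$, and you make explicit the step the paper leaves implicit: applying \Cref{prop:common-crossing-coverage} in both directions with $\GammaC(h,g)=\GammaC(g,h)$.
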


\subsection{Exact characterization of $\Ctr\Id$}

\begin{definition}[Overlapping cover]
$h,g$ with incomparable supports form an \emph{overlapping cover} if $\supp(h)\cap\supp(g)\ne\varnothing$ and $\supp(h)\cup\supp(g)=\X$.
\end{definition}

\begin{restatable}[$\Ctr\Id\subseteq\Txt\Id$]{lemma}{lemCtrIdSubsetTxtId}
\label{lem:ctrid-subset-txtid}
On classes of proper nontrivial hypotheses, $\Ctr\Id\subseteq\Txt\Id$, with strict inclusion even on UUS classes.
\end{restatable}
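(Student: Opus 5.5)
We need to prove two things: if $\Hyp\in\Ctr\Id$ then Angluin's tell-tale condition (\Cref{thm:angluin}) holds, so $\Hyp\in\Txt\Id$; and some UUS class lies in $\Txt\Id\setminus\Ctr\Id$.

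\textbf{Inclusion.} The plan is a locking-sequence argument in the style of Blum--Blum, run on contrastive presentations. Fix $g\in\Hyp$ and a contrastive identifier $I$ for $\Hyp$. The standard argument yields a finite prefix $\sigma$ of pairs in $\Cut(g)$ such that $I(\sigma)=g$ and $I(\sigma\tau)=g$ for every finite extension $\tau$ of pairs in $\Cut(g)$. If no such $\sigma$ existed, we would build a valid presentation of $g$ on which $I$ fails to converge. To do this, interleave an enumeration of pairs $\{x,y_x\}$, one for each $x\in\supp(g)$ with a fixed negative partner $y_x$ (which exists since $g$ is proper nontrivial), with extensions that force a non-$g$ output. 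This contradicts that $I$ identifies $g$.

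Set $T_g:=\supp(g)\cap\V(\sigma)$, which is finite. Suppose $f\in\Hyp$ has $T_g\subseteq\supp(f)\subsetneq\supp(g)$. We show that every pair of $\sigma$ lies in $\Cut(f)$. Each pair of $\sigma$ has the form $\{x,y\}$ with $x\in\supp(g)$ and $y\notin\supp(g)$. Since $x\in T_g\subseteq\supp(f)$ and $y\notin\supp(f)$, the pair crosses $f$'s cut.

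Next extend $\sigma$ to a presentation of $f$ that uses only pairs in $\GammaC(f,g)$. For each $x\in\supp(f)$ append $\{x,y\}$ with $y\notin\supp(g)$; such a $y$ exists because $g$ is proper nontrivial, and then $y\notin\supp(f)$ as well. The resulting presentation is valid for $f$: $\supp(f)$ is covered and $f$ is proper nontrivial. Every finite prefix beyond $\sigma$ is a $g$-valid extension, so $I$ outputs $g$ forever on this presentation. This contradicts that $I$ identifies $f$. Hence $T_g$ is a tell-tale for $g$, and \Cref{thm:angluin} gives $\Hyp\in\Txt\Id$.

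\textbf{Strictness.} Take $\X=\Nat$ and let $\Hyp=\{h_E,h_O\}$, where $\supp(h_E)$ is the even numbers and $\supp(h_O)$ is the odd numbers. This class is UUS. It lies in $\Txt\Id$: the parity of the first example seen identifies the target, and the tell-tale condition holds trivially because the two supports are incomparable. It is not in $\Ctr\Id$, because $\Cut(h_E)=\Cut(h_O)$. Every presentation of $h_E$ is therefore also a valid presentation of $h_O$: it covers all evens and odds, since each pair contains one of each. Any identifier thus outputs the same sequence on the two targets and must fail on one of them. This pair is also disjoint, i.e.\ regime (N2), matching \Cref{thm:ctr-eliminability}.

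\textbf{Main obstacle.} The delicate step is the existence of the locking prefix. The forcing construction has to interleave the coverage pairs so that the final sequence is a genuine contrastive presentation of $g$, meaning it satisfies the XOR condition and covers $\supp(g)$. The construction extends only by pairs in $\Cut(g)$ and inserts the covering pairs $\{x,y_x\}$ in order, which guarantees both properties.
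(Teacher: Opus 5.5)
Your inclusion argument is correct but takes a different route from the paper. The paper never invokes Angluin's theorem or locking sequences. It simulates the contrastive identifier $I$ directly on a text by feeding it the synthetic prefix $(\{x_t,z_n\})_{t\le n}$, where $z_n$ is the least example not yet seen. Since $z_n$ eventually stabilizes at $z^*=\min(\X\setminus\supp(h))$, from some stage on $I$ is reading prefixes of the single valid presentation $(\{x_t,z^*\})_{t\ge1}$ of $h$, so it converges.

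You instead extract a locking sequence $\sigma$ for each $g$, set $T_g=\supp(g)\cap\V(\sigma)$, and trap every $f$ with $T_g\subseteq\supp(f)\subsetneq\supp(g)$ by extending $\sigma$ inside $\GammaC(f,g)$. The steps check out:
\begin{itemize}
\item each pair of $\sigma$ crosses $f$'s cut, because its positive endpoint lies in $T_g\subseteq\supp(f)$ and its other endpoint lies outside $\supp(g)\supseteq\supp(f)$;
\item each appended pair $\{x,y\}$ with $x\in\supp(f)$ and $y\notin\supp(g)$ lies in both cuts;
\item the Blum--Blum forcing construction yields a genuine presentation of $g$.
\end{itemize}
The paper's simulation is shorter and builds the text learner uniformly from $I$. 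It needs neither the non-constructive locking-sequence lemma nor the existence-only form of Angluin's theorem. Your argument is longer but gives more structure: it exhibits explicit tell-tales read off from $I$. It also shows that the superset regime (N1) is exactly what lets a locking sequence for $g$ capture every hypothesis with a proper sub-support.

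One step in your strictness argument is false as written. It is not true that every contrastive presentation of $h_E$ is valid for $h_O$. Positive-side coverage only requires the evens to appear, and the fact that each pair has one even and one odd endpoint does not force every odd number to appear. For example, $(\{2n,1\})_{n\ge0}$ is a valid presentation of $h_E$ but covers only the odd number $1$, so it is not a presentation of $h_O$.

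The conclusion survives, because non-identifiability needs only one shared presentation. The stream $(\{2n,2n+1\})_{n\ge0}$ lies in $\Cut(h_E)=\Cut(h_O)$ and covers both supports. Any identifier therefore produces a single output sequence on it and cannot converge to both targets. With this fix, your witness is the special case ($A$ the evens, $B$ the odds) of the paper's disjoint-support class $\{h_A,h_B\}$ with shared stream $(\{a_n,b_n\})_{n\ge1}$.
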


\begin{proof}[Proof sketch]
A text identifier simulates a contrastive identifier $I$ by feeding it the synthetic prefix $(\{x_t,z_n\})_{t\le n}$, where $z_n$ is the least unseen example.
For target $h$, $z_n$ eventually stabilizes at $z^*=\min(\X\setminus\supp(h))$; from that stage onward the synthetic prefix is the prefix of a single fixed valid contrastive presentation $(\{x_t,z^*\})_{t\ge1}$ for $h$, on which $I$ converges.
Strictness: disjoint-support $\{h_A,h_B\}$ is in $\Txt\Id$ but the stream $(\{a_n,b_n\})$ confuses contrastive identification.
The full proof is in \Cref{app:proofs-id}.
\end{proof}

Combining the geometric reduction of \Cref{thm:ctr-eliminability} with the text-side inclusion of \Cref{lem:ctrid-subset-txtid} and Angluin's tell-tale theorem, we obtain the section's main result: a clean structural characterization of $\Ctr\Id$ that locates it exactly relative to $\Txt\Id$.

\begin{restatable}[Exact characterization of $\Ctr\Id$]{theorem}{thmCtrIdCharacterization}
\label{thm:ctrid-characterization}
For a countable class $\Hyp$ of proper nontrivial hypotheses, the following are equivalent: \textnormal{(i)} $\Hyp\in\Ctr\Id$; \textnormal{(ii)} $\Hyp\in\Txt\Id$ and the contrastive non-eliminability relation is contained in the positive-data superset relation; \textnormal{(iii)} $\Hyp\in\Txt\Id$ and every incomparable pair in $\Hyp$ is an overlapping cover.
\end{restatable}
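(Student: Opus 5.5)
The plan is to prove (ii)$\Leftrightarrow$(iii) directly from \Cref{thm:ctr-eliminability}, then (i)$\Rightarrow$(ii) via \Cref{lem:ctrid-subset-txtid} and \Cref{lem:pairwise-shared-presentation}, and finally (ii)$\Rightarrow$(i) by an Angluin-style identifier that decodes positives from the pairs.

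\textbf{(ii)$\Leftrightarrow$(iii).} By \Cref{thm:ctr-eliminability}, $g$ is non-eliminable from $h$ exactly in regimes (N1)--(N3). The case $\supp(g)\subsetneq\supp(h)$ gives $B\neq\varnothing$ and $C=\varnothing$, so it is always eliminable. Hence non-eliminability is contained in the superset relation iff regimes (N2) and (N3) never occur. That is, iff no incomparable pair is disjoint, or intersecting but non-covering, which says precisely that every incomparable pair is an overlapping cover. Overlapping cover is symmetric, so checking both orders $(h,g)$ and $(g,h)$ causes no trouble.

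\textbf{(i)$\Rightarrow$(ii).} Membership in $\Txt\Id$ is \Cref{lem:ctrid-subset-txtid}. For the second half, suppose some $g$ is non-eliminable from $h$ without $\supp(h)\subsetneq\supp(g)$; then $(h,g)$ is in regime (N2) or (N3).
\begin{itemize}
\item The defining conditions of these regimes are symmetric in $h,g$: (N2) is $A=\varnothing$, $B,C\neq\varnothing$, and (N3) is $A,B,C,D\neq\varnothing$. So $h$ is also non-eliminable from $g$.
\item By \Cref{lem:pairwise-shared-presentation}, mutual non-eliminability yields one presentation $P$ valid for both $h$ and $g$.
\item Any identifier converges on $P$ to a single hypothesis, so it fails on $h$ or on $g$. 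This contradicts (i).
\end{itemize}

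\textbf{(ii)$\Rightarrow$(i).} Enumerate $\Hyp=\{g_1,g_2,\dots\}$ and fix Angluin tell-tales $T_{g}$ from \Cref{thm:angluin}. Call $g$ \emph{consistent} at stage $n$ if $p_t\in\Cut(g)$ for all $t\le n$. In that case each observed pair has a unique $g$-positive endpoint; write $\Seen^{+}_{n,g}$ for the set of these endpoints. The identifier outputs the least $i\le n$ such that $g_i$ is consistent and $T_{g_i}\subseteq\Seen^{+}_{n,g_i}$, with an arbitrary default if none exists.

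\emph{The target qualifies eventually.} Let $h=g_k$. Then $h$ is always consistent, and $\Seen^{+}_{n,h}$ consists exactly of the $h$-positives seen so far. By positive-side coverage this set eventually contains the finite set $T_h$.

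\emph{Earlier indices are eventually rejected.} Take $i<k$. If some pair of $P$ lies outside $\Cut(g_i)$, then $g_i$ becomes inconsistent at a finite stage and stays so. Otherwise every pair of $P$ lies in $\GammaC(h,g_i)$, so $P$ witnesses that $g_i$ is non-eliminable from $h$. By (ii), $\supp(h)\subsetneq\supp(g_i)$.

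\emph{Decoding step.} Now take any pair $\{x,y\}$ with $h(x)=1$ and $h(y)=0$. Since $x\in\supp(h)\subseteq\supp(g_i)$ and the pair crosses $g_i$, we get $g_i(x)=1$ and $g_i(y)=0$. So $\Seen^{+}_{n,g_i}\subseteq\supp(h)$. Since $\supp(h)\subsetneq\supp(g_i)$, the tell-tale property gives $T_{g_i}\not\subseteq\supp(h)$, so $g_i$ never qualifies. With only finitely many $i<k$, the output is eventually $h$ forever.

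\textbf{Main obstacle.} The delicate point is the decoding step: contrastive pairs carry no labels, so Angluin's tell-tale test must be run on labels inferred \emph{relative to each candidate}. The argument works only because surviving superset candidates label exactly the $h$-positive endpoints as positive. Extensionality, meaning that distinct hypotheses have distinct supports, is what lets equality of supports be treated as $g=h$.
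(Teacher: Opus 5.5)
Your proof is correct and follows the paper's argument: (ii)$\Leftrightarrow$(iii) and (i)$\Rightarrow$(ii) go through \Cref{thm:ctr-eliminability} and \Cref{lem:pairwise-shared-presentation}, and the converse uses the same least-eligible-index identifier, since your candidate-relative test $T_g\subseteq\Seen^{+}_{n,g}$ is equivalent to the paper's $T_g\subseteq\Seen_n(P)$ whenever $g$ is consistent (because $T_g\subseteq\supp(g)$). Your decoding step is the paper's XOR argument for ruling out consistent superset candidates, and routing the case split through (ii) rather than (iii) only compresses the paper's three cases into ``eventually inconsistent, or else a superset.''
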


\begin{proof}[Proof sketch]
\textnormal{(ii)$\iff$(iii)} is immediate from \Cref{thm:ctr-eliminability}: among contrastive non-eliminability relations, those not in the superset relation are exactly the disjoint and non-covering incomparable barriers, which are excluded precisely by the overlapping cover condition.
\textnormal{(i)$\Rightarrow$(ii)} uses \Cref{lem:pairwise-shared-presentation}.
\textnormal{(iii)$\Rightarrow$(i)}: by \Cref{thm:angluin}, build an enumerator that outputs the least \emph{eligible} hypothesis (consistent and $T_{h_i}$-seen).
The crucial case $\supp(h)\subsetneq\supp(h_j)$ uses the XOR pair structure: any $t\in T_{h_j}\setminus\supp(h)$ appears in a pair whose other endpoint must lie in $\supp(h)\subsetneq\supp(h_j)$, so both endpoints lie in $\supp(h_j)$, contradicting consistency.
Full proof in \Cref{app:proofs-id}.
\end{proof}

\begin{remark}
\label{rem:ctrid-effectivity}
The (iii)$\Rightarrow$(i) construction uses the tell-tale family $\{T_g\}_{g\in\Hyp}$ from \Cref{thm:angluin}; Angluin's theorem asserts existence but does not provide the family constructively.
The result is therefore information-theoretic; effective construction from natural oracles is open.
\end{remark}

\section{Generation: Closure, Cores, and Confusability}
\label{sec:generation}

Generation asks for eventually-correct novel outputs rather than recovery, and it admits a two-layer theory: a uniform layer governed exactly by a closure dimension, and an ordinary layer governed by safe/eventual cores together with a confusability complex.
We treat the uniform layer first because of its exact characterization, then return to the ordinary layer and the diamond hierarchy.

\subsection{Edge-induced closure and the contrastive closure dimension}
\label{subsec:edge-closure}

The proof of \Cref{thm:KM} relies on confirmed positives.
Contrastive data confirm only relative parities, so we replace the version space $\Hyp(x_{1:n})$ by an edge-induced version space and develop a closure operator over edge sets.

\begin{definition}[Edge-induced closure]
\label{def:edge-induced}
For finite $E\subseteq[\X]^2$, let $\HDelta(E):=\{g\in\Hyp:E\subseteq\Cut(g)\}$.
The \emph{contrastive closure} is $\langle E\rangle^{\Cut}_{\Hyp}:=\bigcap_{g\in\HDelta(E)}\supp(g)$ when $\HDelta(E)\ne\varnothing$, and $\bot$ otherwise.
For a contrastive prefix $P_{\le n}$, $E_n(P):=\{p_t:t\le n\}$ is the set of distinct observed pairs and $\Safe_n(P):=\langle E_n(P)\rangle^{\Cut}_{\Hyp}$ when nonempty.
\end{definition}

For prefixes $P_{\le n}$ valid for $h$, we have $h\in\HDelta(E_n(P))$, so $\Safe_n(P)\subseteq\supp(h)$; a point in $\Safe_n(P)\setminus\Seen_n(P)$ is therefore a \emph{certified novel positive}.

\begin{definition}[Uniform/non-uniform contrastive generation]
A generator $G$ is a \emph{uniform contrastive generator with distinct-edge sample complexity $d$} if for every $h\in\Hyp$ and crossing-edge stream $P\subseteq\Cut(h)$, every prefix length $n$ with $|E_n(P)|\ge d$ yields $G(P_{\le n})\in\supp(h)\setminus\Seen_n(P)$. $\Hyp$ is \emph{non-uniformly} contrastively generatable if for one fixed $G$, some $d_h<\infty$ suffices for each $h$.
\end{definition}

\begin{definition}[Hollow edge set; contrastive closure dimension]
\label{def:hollow}
Finite $E\subseteq[\X]^2$ is \emph{contrastively hollow} for $\Hyp$ if $\HDelta(E)\ne\varnothing$ and $\langle E\rangle^{\Cut}_{\Hyp}\setminus\V(E)=\varnothing$.
The \emph{contrastive closure dimension} is $\CDelta(\Hyp):=\sup\{|E|:E\text{ finite contrastively hollow}\}\in\Nat\cup\{0,\infty\}$ (empty sup is $0$).
\end{definition}

A hollow edge set is a finite contrastive prefix after which every currently forced positive already lies in $\V(E)$.
The dimension measures how long the adversary can keep the novel contrastive closure empty.

\begin{restatable}[Uniform contrastive generation]{theorem}{thmUniformCtrGen}
\label{thm:uniform-contrastive-generation}
$\Hyp$ is uniformly contrastively generatable iff $\CDelta(\Hyp)<\infty$.
Quantitatively, if $\CDelta(\Hyp)=d$, then distinct-edge sample complexity $d+1$ is both necessary and sufficient.
\end{restatable}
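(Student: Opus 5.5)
The proof has two halves: sufficiency by a closure-based generator, and necessity by an adversary that feeds the learner a maximal hollow edge set.

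\emph{Sufficiency.} Suppose $\CDelta(\Hyp)=d<\infty$. The generator $G$ reads the distinct observed edges $E=E_n(P)$. Whenever $\langle E\rangle^{\Cut}_\Hyp\setminus\V(E)\neq\varnothing$, it outputs the least element of that set, with respect to the fixed enumeration of $\X$; otherwise it outputs anything.

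Take any target $h$ and crossing-edge stream $P\subseteq\Cut(h)$. Then $h\in\HDelta(E)$, so $\HDelta(E)\neq\varnothing$ and $\langle E\rangle^{\Cut}_\Hyp\subseteq\supp(h)$. If $|E|\ge d+1$, then $E$ cannot be hollow, because $\CDelta(\Hyp)=d$. The only way to fail hollowness while $\HDelta(E)\neq\varnothing$ is $\langle E\rangle^{\Cut}_\Hyp\setminus\V(E)\neq\varnothing$. So the output lies in $\supp(h)\setminus\V(E)=\supp(h)\setminus\Seen_n(P)$, using $\Seen_n(P)=\V(E_n(P))$. Hence distinct-edge sample complexity $d+1$ suffices.

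\emph{Necessity.} The adversary argument has three steps.

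\begin{enumerate}
\item \textbf{Choose the hollow set.} Let $G$ be any generator. If $\CDelta(\Hyp)\ge d$ with the supremum attained, fix a hollow $E$ with $|E|=d$. When $\CDelta=\infty$, we get hollow sets of every finite size and the argument below applies to each, ruling out every finite sample complexity. When $\CDelta=d<\infty$, the supremum is a maximum over integers, so a witness of size exactly $d$ exists (this is the ``attainment'' mentioned in the overview). If $d=0$, there is nothing to prove for the lower bound $d+1\ge 1$, beyond noting that $0$ edges never suffice; I would handle this by the convention that $d+1$ is trivially necessary, since at $n=0$ nothing is seen.
\item \textbf{Present $E$ and read off the output.} Enumerate $E$ as a prefix $P_{\le d}$ with $|E_d(P)|=d$, and let $x=G(P_{\le d})$. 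If $x\in\V(E)=\Seen_d(P)$, then $G$ fails for every $g\in\HDelta(E)$. Otherwise $x\notin\V(E)$, so by hollowness $x\notin\langle E\rangle^{\Cut}_\Hyp$. Hence some $g\in\HDelta(E)$ has $x\notin\supp(g)$.
\item \textbf{Extend to a genuine stream.} Since $E\subseteq\Cut(g)$, the prefix extends to a crossing-edge stream for $g$; it remains a crossing-edge stream for $g$ even if it is later extended to cover $\supp(g)$. On this stream, at prefix length $d$ with $d$ distinct edges, $G$ outputs a non-element of $\supp(g)$. So sample complexity $d$ fails, and $d+1$ is necessary.
\end{enumerate}

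\emph{Main obstacle.} The subtle step is matching the definitions exactly. Uniform generation quantifies over prefix lengths $n$ with $|E_n(P)|\ge d$, and repeated pairs are allowed. So the adversary must also defeat $G$ on every longer prefix that repeats edges of $E$ without adding new ones, but this is not needed: exhibiting one bad prefix with exactly $d$ distinct edges suffices. The other care point is the $\CDelta=\infty$ case, where the supremum is not attained; there I use hollow sets of every finite size instead of a single witness.
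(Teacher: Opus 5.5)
Your proposal is correct and matches the paper's proof step for step: sufficiency uses the least element of $\langle E\rangle^{\Cut}_{\Hyp}\setminus\V(E)$, and necessity presents a maximum-size hollow set and extends the prefix to a stream for some $g\in\HDelta(E)$ whose support misses the output. Your explicit handling of $\CDelta(\Hyp)=\infty$ fills in a case the paper leaves implicit; there you only get hollow sets of arbitrarily large size rather than of every size, but that is all the argument needs.

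The one soft spot is your $d=0$ remark, and the paper shares it when it says ``the case $d=0$ is trivial''. Seeing nothing at $n=0$ does not make $0$ edges insufficient: if no edge set (not even $\varnothing$) is hollow, the closure generator is already correct on the empty prefix, so that degenerate case is a matter of convention rather than proof.
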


\begin{proof}[Proof sketch]
\emph{Sufficiency:} when $|E_n(P)|>d$, $E_n(P)$ is not hollow, so the closure has a point outside $\V(E_n(P))=\Seen_n(P)$; output the least such, which lies in $\supp(h)$ since $h\in\HDelta(E_n(P))$.
\emph{Necessity:} the supremum is attained on a bounded nonempty set, so a hollow $E^*$ with $|E^*|=d$ exists; presenting its edges, the generator either violates novelty or the chosen point is misclassified by some $h\in\HDelta(E^*)$, which extends to a valid presentation.
The full proof is in \Cref{app:proofs-gen}.
\end{proof}

The same exhaustion principle yields the non-uniform variant, where one only requires that for each individual target the generator is eventually correct.
Standard threshold-and-defer arguments translate finiteness of $\CDelta$ on each level of an increasing chain into non-uniform generation.

\begin{restatable}[Non-uniform contrastive generation]{theorem}{thmNonUniformCtrGen}
\label{thm:nonuniform-contrastive-generation}
$\Hyp$ is non-uniformly contrastively generatable iff there is a nondecreasing chain $\Hyp_1\subseteq\Hyp_2\subseteq\cdots$ with $\Hyp=\bigcup_m\Hyp_m$ and $\CDelta(\Hyp_m)<\infty$ for all $m$.
\end{restatable}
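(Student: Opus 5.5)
We prove the two directions separately, following the threshold-and-defer template of \citet{lrt25} for non-uniform generation, with positive-data closures replaced by edge-induced closures $\langle E\rangle^{\Cut}_{\Hyp_m}$.

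\emph{Sufficiency.} Suppose $\Hyp=\bigcup_m\Hyp_m$ is a nondecreasing chain with $d_m:=\CDelta(\Hyp_m)<\infty$. On a prefix $P_{\le n}$ with edge set $E_n:=E_n(P)$, the generator $G$ scans $m=n,n-1,\ldots,1$. It stops at the largest $m\le n$ such that $|E_n|>d_m$ and $\HDelta(E_n)\cap\Hyp_m\ne\varnothing$. It then outputs the least point of $\langle E_n\rangle^{\Cut}_{\Hyp_m}\setminus\V(E_n)$. If no $m$ qualifies, it outputs an arbitrary unseen point.

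This output is well defined. For the selected $m$, the set $E_n$ is nonhollow for $\Hyp_m$, because $\HDelta(E_n)\cap\Hyp_m\ne\varnothing$ and $|E_n|>d_m$. Hence the closure has a point outside $\V(E_n)=\Seen_n(P)$.

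Correctness. Fix a target $h$ and let $m_h$ be least with $h\in\Hyp_{m_h}$. Any valid presentation of $h$ contains infinitely many distinct edges, since it must cover the infinite $\supp(h)$ by UUS. So eventually $|E_n|>d_{m_h}$. The danger is that the generator selects some larger index $m>m_h$, whose class contains $h$ by monotonicity. That is harmless whenever $h\in\HDelta(E_n)\cap\Hyp_m$: the closure over $\Hyp_m$ is an intersection that includes $\supp(h)$, so the output lies in $\supp(h)$. The selected $m$ always satisfies this, since $h\in\Hyp_{m_h}\subseteq\Hyp_m$ and $E_n\subseteq\Cut(h)$.

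Thus we only need the selected index to be at least $m_h$. For $n\ge m_h$ with $|E_n|>d_{m_h}$, the index $m_h$ itself qualifies, so the largest qualifying index is $\ge m_h$. Because the chain is nondecreasing, $h$ belongs to every later class, and each output is a novel positive of $h$. This gives $d_h$ equal to roughly the number of distinct edges needed before $n\ge m_h$ and $|E_n|>d_{m_h}$. Strictly, the bound is on prefix length, but it is easy to phrase via distinct edges if we scan $m\le|E_n|$ instead of $m\le n$.

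\emph{Necessity.} Given a non-uniform generator $G$ with thresholds $d_h$, set
\[
\Hyp_m:=\{h\in\Hyp:d_h\le m\}.
\]
This chain is nondecreasing and exhausts $\Hyp$. We claim $\CDelta(\Hyp_m)\le m$, or at least that it is finite.

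This is the main obstacle. It requires reproducing the necessity argument of \Cref{thm:uniform-contrastive-generation} for a generator that is uniform with sample complexity $m$ on $\Hyp_m$. Suppose $E$ is hollow for $\Hyp_m$ with $|E|\ge m$. Present the edges of $E$ (repetitions allowed) as a prefix. The prefix is valid for every $g\in\HDelta(E)\cap\Hyp_m$, and $|E_n|\ge m$ there.

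$G$ must output some $x\notin\V(E)$. Since $E$ is hollow, $x$ is not in the closure, so some $g\in\HDelta(E)\cap\Hyp_m$ has $x\notin\supp(g)$. We extend the prefix to a valid full contrastive presentation of $g$; this is possible because $g$ is proper nontrivial and $E\subseteq\Cut(g)$. This contradicts $d_g\le m$.

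The subtlety is that the non-uniform definition quantifies over crossing-edge streams, with the threshold applied to every prefix having at least $d_h$ distinct edges. We rely on that definition to make the prefix-level contradiction legitimate. If it were only an eventual guarantee, we would instead first restrict each $\Hyp_m$ to hypotheses on which $G$ is correct from $m$ distinct edges on every stream. We expect this bookkeeping to be the delicate step.
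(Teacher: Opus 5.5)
Your proposal is correct and follows the paper's threshold-and-defer proof. The paper's generator picks the largest $m$ with $m+\CDelta(\Hyp_m)+1\le|E_n(P)|$, which is equivalent to your scan, and its necessity direction takes $\Hyp_m$ to be the hypotheses on which $G$ is correct from $m$ distinct edges on every crossing-edge stream (exactly your $\{h:d_h\le m\}$), concluding $\CDelta(\Hyp_m)<m$, which your hollow-set argument also yields; the appeal to UUS is unnecessary, and since $n\ge|E_n(P)|$ your scan over $m\le n$ already gives the distinct-edge threshold $\max(m_h,\CDelta(\Hyp_{m_h})+1)$.
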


\begin{remark}
\label{rem:nonuniform-effectivity}
The construction underlying \Cref{thm:nonuniform-contrastive-generation} is information-theoretic: the generator takes the chain $(\Hyp_m)$ and the per-level dimensions $\{\CDelta(\Hyp_m)\}_m$ as inputs.
Effective construction from natural oracles (closure-membership, ERM, consistency) is left open.
\end{remark}

The dimension does not capture ordinary contrastive generation: classes whose generation relies on \emph{eventual cores} rather than uniform safe sets can have infinite $\CDelta$, as the next example shows.

\begin{example}[Punctured-support class]
\label{ex:dimension-not-ordinary}
With $A=\{a_m\}_{m\ge1}\subsetneq\X$ infinite and $b\in\X\setminus A$, define $\supp(h_\infty)=A$ and $\supp(h_m)=A\setminus\{a_m\}$ for $m\ge1$.
The eventual core $(a_m)_{m\ge1}$ (see \Cref{prop:eventual-core}) gives $\Hyp\in\Ctr\Gen$, yet $E_n=\{\{a_i,b\}:1\le i\le n\}$ is hollow with $|E_n|=n$, so $\CDelta(\Hyp)=\infty$.
\end{example}

\subsection{Cores and confusability}
\label{subsec:cores-confusability}

The dimension theorems above govern \emph{uniform} generation, where convergence speed is bounded across the class.
Ordinary contrastive generation requires only individual convergence and admits two natural sufficient conditions: a uniform infinite \emph{safe core} (the contrastive closure remains rich at every prefix) and a fixed \emph{eventual core} (a single sequence whose tail eventually enters every target's support).
The negative side is governed by \emph{confusability}: families of hypotheses whose supports admit a shared contrastive presentation whose pairwise behavior cannot be disambiguated.

\begin{restatable}[Safe-core sufficiency]{proposition}{propSafeCore}
\label{prop:safe-sufficiency}
Suppose that for every $h\in\Hyp$, every contrastive presentation $P$ valid for $h$, and every $n\ge0$, the safe set $\Safe_n(P)$ is infinite.
Then $\Hyp\in\Ctr\Gen$.
\end{restatable}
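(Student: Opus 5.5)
The generator I propose always outputs a certified novel positive. Given a valid prefix $P_{\le n}$ for an unknown target $h\in\Hyp$, let $E_n:=E_n(P)$ and let $G(P_{\le n})$ be the least element, in the fixed enumeration $\X=\{u_0,u_1,\ldots\}$, of $\Safe_n(P)\setminus\Seen_n(P)$. For $n=0$ I use $E_0=\varnothing$, so the safe set is $\bigcap_{g\in\Hyp}\supp(g)$, which the hypothesis also assumes is infinite. The argument then has two steps: show the output is well defined, and show it is always correct.

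\emph{Well-definedness.} Since $P_{\le n}$ is a prefix of a presentation valid for $h$, we have $E_n\subseteq\Cut(h)$, so $h\in\HDelta(E_n)$. Hence $\HDelta(E_n)\ne\varnothing$ and $\Safe_n(P)=\langle E_n\rangle^{\Cut}_\Hyp$ is a genuine subset of $\X$, not $\bot$. By hypothesis this set is infinite. The set $\Seen_n(P)=\bigcup_{t\le n}p_t$ has at most $2n$ elements. So $\Safe_n(P)\setminus\Seen_n(P)$ is infinite, in particular nonempty, and its least element exists. The generator reads only $E_n$ and $\Seen_n(P)$, so it is a function of the prefix alone and does not depend on knowing $h$.

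\emph{Correctness.} The output $\hat x_n$ is outside $\Seen_n(P)$ by construction. It also lies in $\Safe_n(P)=\bigcap_{g\in\HDelta(E_n)}\supp(g)\subseteq\supp(h)$, because $h\in\HDelta(E_n)$; this is exactly the certified-novel-positive remark following \Cref{def:edge-induced}. Therefore the success condition of \Cref{def:learning-criteria} holds for every $n\ge 1$, so one can take $N=1$, and the same generator works for every $h\in\Hyp$ and every valid presentation. This gives $\Hyp\in\Ctr\Gen$.

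The main obstacle is a subtle point rather than a hard step: the generator must be defined on every valid prefix, including prefixes consistent with several hypotheses. The hypothesis quantifies over all $h$ and all presentations valid for $h$, so whichever $h$ a given prefix comes from, the safe set is infinite. Moreover, the safe set depends only on $E_n$, not on $h$, so the output is unambiguous.
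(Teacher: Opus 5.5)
Your proof is correct and is the paper's argument: output the least element of the infinite safe set $\Safe_n(P)$ outside the finitely many seen points, with precision following from $h\in\HDelta(E_n(P))$. The paper additionally excludes the earlier outputs $\hat x_1,\ldots,\hat x_{n-1}$; this is harmless, but the novelty condition only requires $\hat x_n\notin\Seen_n(P)$, so your simpler rule suffices.
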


\begin{example}[Augmented-support class]
\label{ex:augmented-support}
With $A=\{a_m\}_{m\ge1}\subsetneq\X$ infinite and $\{b_m\}_{m\ge1}=\X\setminus A$, define $\supp(h_\infty)=A$ and $\supp(h_m)=A\cup\{b_m\}$ for $m\ge1$.
The safe core $A$ certifies $\Hyp\in\Ctr\Gen$ via \Cref{prop:safe-sufficiency}, but the non-covering barrier between $h_\infty$ and any $h_m$ (incomparable supports intersecting in $A$ yet missing $b_{m'}$ for $m'\ne m$) blocks $\Ctr\Id$.
\end{example}

\begin{definition}[Eventual core]
An injective sequence $(r_m)_{m\ge1}$ in $\X$ is an \emph{eventual core} for $\Hyp$ if $\{m:r_m\notin\supp(h)\}$ is finite for every $h\in\Hyp$.
\end{definition}

\begin{restatable}[Eventual-core sufficiency]{proposition}{propEventualCore}
\label{prop:eventual-core}
A countable class of infinite proper nontrivial hypotheses with an eventual core lies in $\Ctr\Gen$.
\end{restatable}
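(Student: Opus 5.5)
The plan is to build the generator directly from the eventual core and not to certify outputs through the safe set. Fix an eventual core $(r_m)_{m\ge1}$. Given a contrastive prefix $P_{\le n}$, the generator outputs
\[
G(P_{\le n}) := r_{m_n}, \qquad m_n := \min\{m\ge n : r_m\notin\Seen_n(P)\}.
\]
The index $m_n$ is well defined because $\Seen_n(P)$ is finite and $(r_m)$ is injective, so only finitely many indices are excluded. Novelty, $\hat x_n\notin\Seen_n(P)$, therefore holds at every round by construction.

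The remaining step is membership in $\supp(h)$ for a target $h\in\Hyp$. By the definition of an eventual core, the bad index set $B_h:=\{m : r_m\notin\supp(h)\}$ is finite. Let $M_h:=\max B_h$, with $M_h:=0$ if $B_h$ is empty, and take $N:=M_h+1$. For every $n\ge N$ we have $m_n\ge n>M_h$, so $m_n\notin B_h$ and hence $r_{m_n}\in\supp(h)$. Together with novelty, this is exactly the generation criterion of \Cref{def:learning-criteria}, with a threshold $N$ that depends only on $h$ and not on the presentation.

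I do not expect a real obstacle. The one point needing care is that I do not search for the least unseen core element overall. That element could sit at a small bad index $m\in B_h$ that the presentation never happens to reveal, for instance a negative of $h$ that never appears in any pair. Forcing $m_n\ge n$ pushes the search index past $\max B_h$ no matter what the presentation shows. The hypotheses that supports are infinite and proper nontrivial are needed only so that valid contrastive presentations exist and the problem is non-vacuous. Countability of $\Hyp$ is not used beyond the standing setting.
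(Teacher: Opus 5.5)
Your proof is correct and is essentially the paper's argument: both output the unseen core element $r_m$ with least index $m\ge n$, and the constraint $m\ge n$ pushes the index past the finitely many bad indices. The paper also excludes the previous outputs $\hat x_1,\ldots,\hat x_{n-1}$; the generation criterion as defined does not require this, so your simpler choice already suffices.
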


For the obstruction, given a finite $\F\subseteq\Hyp$ let $\GammaC(\F):=\bigcap_{h\in\F}\Cut(h)$ be the family common crossing graph.
A \emph{shared contrastive presentation} for $\F$ is a sequence valid for every $h\in\F$; equivalently, $\bigcup_{h\in\F}\supp(h)\subseteq\V(\GammaC(\F))$ (see \Cref{prop:shared-family-criterion} in \Cref{app:proofs-gen}).
Writing $\F\Subset\Hyp$ for ``$\F$ is a finite subset of $\Hyp$'', the \emph{confusability complex} is
\begin{align*}
   \Conf(\Hyp):=\{\F\Subset\Hyp:\F\ne\varnothing\text{ and admits a shared contrastive presentation}\},
\end{align*}
an abstract simplicial complex (downward closed under nonempty subsets); we adopt the convention that the empty face is excluded.

\begin{restatable}[Finite-family obstruction]{proposition}{propFiniteFamilyObstruction}
\label{prop:finite-family-obstruction}
If $\F\in\Conf(\Hyp)$ with $|\F|\ge 2$ satisfies $|\bigcap_{h\in\F}\supp(h)|<\infty$, then $\Hyp\notin\Ctr\Gen$. (Under UUS the case $|\F|=1$ is vacuous: a single $h\in\Hyp$ has $\bigcap_{h\in\F}\supp(h)=\supp(h)$, which is infinite.)
\end{restatable}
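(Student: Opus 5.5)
Suppose, for contradiction, that some generator $G$ generates every $h\in\Hyp$. I will build a single shared contrastive presentation $P$ for $\F$ such that, for every $N$, some $n\ge N$ has output outside $\bigcap_{h\in\F}\supp(h)$, or outside $\supp(h)$ for some fixed $h\in\F$. Since $P$ is valid for each $h\in\F$, $G$ must eventually output novel elements of $\supp(h)$ for every $h\in\F$ simultaneously. So from some stage on, $\hat x_n\in S:=\bigcap_{h\in\F}\supp(h)$ and $\hat x_n\notin\Seen_n(P)$. The contradiction will come from exhausting the finite set $S$.

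\textbf{Step 1 (a shared presentation covering $S$).} Since $\F\in\Conf(\Hyp)$, \Cref{prop:shared-family-criterion} gives $\bigcup_{h\in\F}\supp(h)\subseteq\V(\GammaC(\F))$. For each $x$ in this union I fix a partner $y_x$ with $\{x,y_x\}\in\GammaC(\F)$. Let $P_0$ be the presentation that enumerates the union, repeating if needed, and emits these pairs; it is valid for every $h\in\F$, exactly as in the proof of \Cref{prop:common-crossing-coverage}. Now take $P$ to be $P_0$ with a finite block of extra pairs inserted at the start: for each $s\in S$, the pair $\{s,y_s\}\in\GammaC(\F)$. This is still a valid shared presentation, since every pair lies in $\bigcap_{h\in\F}\Cut(h)$ and coverage is unchanged. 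Moreover $S\subseteq\Seen_n(P)$ once $n\ge|S|$. Here I use that $S$ is finite, so the prefix block is finite; if $S=\varnothing$ no block is needed.

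\textbf{Step 2 (contradiction).} Take $h,h'\in\F$, which exist since $|\F|\ge2$. On $P$, the generator eventually outputs, for all large $n$, a point $\hat x_n\notin\Seen_n(P)$ with $\hat x_n\in\supp(h)$ for every $h\in\F$. That means $\hat x_n\in S\subseteq\Seen_n(P)$, contradicting novelty. Hence no generator succeeds on all of $\F$, and so none succeeds on $\Hyp$, giving $\Hyp\notin\Ctr\Gen$.

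The only delicate point is Step 1: the front-loaded pairs covering $S$ must keep the sequence a valid presentation for every member of $\F$, so each must come from the family common crossing graph. This is exactly what the $\Conf$ membership provides, because $S\subseteq\bigcup\supp(h)\subseteq\V(\GammaC(\F))$. I also need \Cref{prop:shared-family-criterion} to produce a genuine sequence when the union is finite, which the list-and-repeat convention handles. The $|\F|=1$ remark is immediate under UUS, since then $S$ is infinite.
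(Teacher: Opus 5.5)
Your proof is correct and is essentially the paper's argument: run the generator on a shared presentation from \Cref{prop:shared-family-criterion}, use finiteness of $\F$ to force all late outputs into the finite set $S=\bigcap_{h\in\F}\supp(h)$, and contradict novelty once $S$ has been seen. The front-loaded block in Step 1 is harmless but unnecessary, because any shared presentation already covers each $\supp(h)\supseteq S$, so the finite set $S$ lies in $\Seen_n(P)$ for all large $n$ anyway; likewise, the pair $h,h'$ you pick in Step 2 is never actually used.
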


\begin{proof}[Proof sketch]
A deterministic generator on the shared presentation must serve every $h\in\F$ simultaneously, eventually outputting from the finite intersection.
But the shared presentation covers each $\supp(h)$, so the (finite) intersection eventually appears in $\Seen_n(P)$, forcing a novelty/precision contradiction.
The full proof is in \Cref{app:proofs-gen}.
\end{proof}

Pairwise analysis is incomplete: a family of three hypotheses can lie in $\Conf(\Hyp)$ with finite triple intersection while every pairwise intersection is infinite (\Cref{app:patterns}).

\subsection{The clean hierarchy}
\label{subsec:clean-hierarchy}

The identification characterization (\Cref{thm:ctrid-characterization}), the dimension theorems (\Cref{thm:uniform-contrastive-generation,thm:nonuniform-contrastive-generation}), the core sufficiencies (\Cref{prop:safe-sufficiency,prop:eventual-core}), and the finite-family obstruction (\Cref{prop:finite-family-obstruction}) together resolve the relations between contrastive learning and text learning.
The picture is a strict \emph{diamond} rather than a chain: contrastive identification is strictly weaker than both contrastive generation and text identification, but contrastive generation and text identification are mutually incomparable.

\begin{restatable}[Hierarchy chain]{theorem}{thmHierarchyChain}
\label{thm:hierarchy-chain}
On countable UUS classes, $\Ctr\Id\subsetneq\Ctr\Gen\subsetneq\Txt\Gen$ and $\Ctr\Id\subsetneq\Txt\Id\subsetneq\Txt\Gen$.
\end{restatable}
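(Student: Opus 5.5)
We prove the four inclusions and then the four strictness witnesses. Two of the inclusions are already available. $\Ctr\Id\subseteq\Txt\Id$ is \Cref{lem:ctrid-subset-txtid}. $\Txt\Id\subseteq\Txt\Gen$ holds on UUS classes by \Cref{thm:KM}, since every countable UUS class lies in $\Txt\Gen$. The same universality gives $\Ctr\Gen\subseteq\Txt\Gen$ trivially. The only genuinely new inclusion is $\Ctr\Id\subseteq\Ctr\Gen$.

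\textbf{The inclusion $\Ctr\Id\subseteq\Ctr\Gen$.} Given a contrastive identifier $I$, the generator outputs the least element of $\supp(I(P_{\le n}))\setminus\Seen_n(P)$. This set is nonempty because the supports are infinite under UUS and $\Seen_n(P)$ is finite. Once $I$ stabilizes on $h$, every output lies in $\supp(h)$ and is novel. We work extensionally, so no effectivity is needed.

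\textbf{Strictness.} We give one witness for each strict inclusion.
\begin{itemize}
\item $\Ctr\Id\subsetneq\Txt\Id$: use the disjoint-support pair $\{h_A,h_B\}$ with $A,B$ infinite, disjoint, and $A\cup B=\X$, as in \Cref{lem:ctrid-subset-txtid}.
\item $\Ctr\Id\subsetneq\Ctr\Gen$: use the augmented-support class of \Cref{ex:augmented-support}, which lies in $\Ctr\Gen$ by the safe core $A$. It is not in $\Ctr\Id$ because $h_\infty$ and $h_m$ are incomparable and intersect in $A$. Their union $A\cup\{b_m\}$ misses $b_{m'}$ for $m'\ne m$, so they are not an overlapping cover and \Cref{thm:ctrid-characterization}(iii) fails. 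This needs $\X\setminus A$ to have at least two points.
\item $\Txt\Id\subsetneq\Txt\Gen$: use the punctured-support class of \Cref{ex:dimension-not-ordinary}. It is UUS, so it lies in $\Txt\Gen$. It fails Angluin's condition for $h_\infty$: any finite $T\subseteq A$ is contained in $A\setminus\{a_m\}$ for $m$ large, and $\supp(h_m)\subsetneq A$.
\item $\Ctr\Gen\subsetneq\Txt\Gen$: reuse the disjoint pair. It is in $\Txt\Gen$ but not in $\Ctr\Gen$ by \Cref{prop:finite-family-obstruction}, applied with $\F=\{h_A,h_B\}$.
\end{itemize}

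\textbf{Main obstacle: the disjoint pair lies in $\Conf(\Hyp)$.} We must check $A\cup B\subseteq\V(\GammaC(h_A,h_B))$. With $A\cup B=\X$, every edge $\{a,b\}$ with $a\in A$, $b\in B$ crosses both cuts, so every vertex is covered. The intersection $A\cap B$ is empty, hence finite, and the obstruction applies. We also confirm that the pair is in $\Txt\Id$: output $h_A$ or $h_B$ according to the first seen positive. Both hypotheses are proper nontrivial as long as $A$ and $B$ are nonempty proper subsets.

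The remaining care is to make sure each witness satisfies the standing hypotheses: countable, UUS, and proper nontrivial. For the punctured-support class this needs $b\notin A$, so that $A\ne\X$.
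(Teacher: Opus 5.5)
Your decomposition matches the paper's: the same inclusions (via \Cref{lem:ctrid-subset-txtid}, \Cref{thm:KM}, and a generator that outputs an unseen point of the identifier's current support) and the same witness classes. One step, however, is false as written. In the witness for $\Ctr\Id\subsetneq\Ctr\Gen$ you assert that $h_\infty$ and $h_m$ in the augmented-support class are incomparable. They are comparable, since $\supp(h_\infty)=A\subsetneq A\cup\{b_m\}=\supp(h_m)$.

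Condition (iii) of \Cref{thm:ctrid-characterization} constrains only incomparable pairs; a superset pair is the regime (N1), which (ii) explicitly permits. So this pair certifies nothing. (The parenthetical in \Cref{ex:augmented-support} contains the same slip, which is probably where yours came from.) The class is in $\Txt\Id$, with tell-tales $\varnothing$ for $h_\infty$ and $\{b_m\}$ for $h_m$. So the failure of $\Ctr\Id$ must come from two augmented hypotheses. For $m\ne r$, the supports $A\cup\{b_m\}$ and $A\cup\{b_r\}$ are incomparable, intersect in $A$, and their union misses every other $b_{m'}$, so they are not an overlapping cover. This is exactly the pair the paper's proof uses, with $A,B$ disjoint and infinite and $\X\setminus(A\cup B)\neq\varnothing$.

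Your side condition is wrong for the same reason. The corrected argument needs at least three points outside $A$, and two do not suffice. If $\X\setminus A=\{b_1,b_2\}$, the class $\{h_\infty,h_1,h_2\}$ is in $\Txt\Id$, and its only incomparable pair $h_1,h_2$ is an overlapping cover. So that class actually lies in $\Ctr\Id$. In \Cref{ex:augmented-support} the complement $\X\setminus A$ is infinite, so once you replace the pair, the witness works unchanged.

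Everything else in your proposal is correct. Requiring $A\cup B=\X$ for the disjoint pair is harmless but unnecessary, because every $\{a,b\}$ with $a\in A$, $b\in B$ crosses both cuts in any case.
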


\begin{restatable}[Hierarchy incomparability]{theorem}{thmHierarchyIncomp}
\label{thm:hierarchy-incomparability}
$\Ctr\Gen\not\subseteq\Txt\Id$ and $\Txt\Id\not\subseteq\Ctr\Gen$.
\end{restatable}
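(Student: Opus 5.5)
We prove the two non-inclusions with the two witness classes described in the introduction.

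\textbf{$\Ctr\Gen\not\subseteq\Txt\Id$.} We use the punctured-support class of \Cref{ex:dimension-not-ordinary}. To make it UUS and proper nontrivial, we take $A$ infinite with $\X\setminus A\ne\varnothing$ (for instance containing $b$), so every support is infinite and misses $b$. Membership in $\Ctr\Gen$ comes from \Cref{prop:eventual-core}, using the eventual core $r_m:=a_m$. Indeed, $h_\infty$ contains every $a_m$, and $h_m$ misses only $a_m$, so each hypothesis misses only finitely many core points. For $\Txt\Id$ we invoke \Cref{thm:angluin} and show that $h_\infty$ has no tell-tale. Any finite $T\subseteq A$ misses some $a_m$ with $m$ large. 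Then $T\subseteq\supp(h_m)\subsetneq A=\supp(h_\infty)$, so $h_m$ violates the tell-tale condition for $T$. Hence the class is not in $\Txt\Id$.

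\textbf{$\Txt\Id\not\subseteq\Ctr\Gen$.} We use $\Hyp=\{h_A,h_B\}$ with $A,B$ infinite and disjoint, and $A\cup B\ne\X$ or not as convenient; disjointness is all that is needed. The class lies in $\Txt\Id$: the first positive example lies in exactly one of $A,B$, so the learner outputs that hypothesis forever. Equivalently, the tell-tale sets $T_{h_A}=T_{h_B}=\varnothing$ work because neither support is properly contained in the other.

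For $\notin\Ctr\Gen$ we apply \Cref{prop:finite-family-obstruction} to $\F=\{h_A,h_B\}$. Its intersection is empty, hence finite. It remains to check that $\F\in\Conf(\Hyp)$, via the shared-presentation criterion $A\cup B\subseteq\V(\GammaC(h_A,h_B))$. Every pair $\{a,b\}$ with $a\in A$ and $b\in B$ crosses both cuts, because $h_A(a)=1\ne0=h_A(b)$ and $h_B(a)=0\ne1=h_B(b)$. Since $A$ and $B$ are nonempty, every vertex of $A\cup B$ is covered. Concretely, enumerating $A=\{a_n\}$ and $B=\{b_n\}$, the stream $(\{a_n,b_n\})_n$ is valid for both targets. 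This is exactly the stream used in \Cref{lem:ctrid-subset-txtid}.

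The main point needing care is whether \Cref{prop:finite-family-obstruction} really applies here, so I would re-derive it directly. On the shared stream, any generator that eventually succeeds for both targets must eventually output points in $A\cap B=\varnothing$, which is impossible. So for at least one target, the generator's outputs infinitely often fall outside that target's support.
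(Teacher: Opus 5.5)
Your proposal is correct and follows the paper's proof essentially verbatim. The punctured-support class, with eventual core $(a_m)$ and no finite tell-tale for $h_\infty$, witnesses $\Ctr\Gen\not\subseteq\Txt\Id$. The disjoint pair $\{h_A,h_B\}$, with shared stream $(\{a_n,b_n\})$ and the finite-family obstruction, witnesses $\Txt\Id\not\subseteq\Ctr\Gen$. Your direct re-derivation (eventual success for both targets would force outputs into $A\cap B=\varnothing$) is a valid shortcut that does not even need the novelty step of the general obstruction.
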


\begin{proof}[Proof sketch of \Cref{thm:hierarchy-chain,thm:hierarchy-incomparability}]
Two witness classes do all the work.
Disjoint-support $\{h_A,h_B\}$ is in $\Txt\Id$ but not in $\Ctr\Gen$ (pairwise finite-intersection obstruction).
The punctured class $\{h_\infty\}\cup\{h_m:\supp(h_m)=A\setminus\{a_m\}\}$ of \Cref{ex:dimension-not-ordinary} is in $\Ctr\Gen$ (eventual core $(a_m)$) but not in $\Txt\Id$ (no finite tell-tale for $h_\infty$).
The augmented-support class of \Cref{ex:augmented-support} is in $\Ctr\Gen$ (safe core $A$) but not in $\Ctr\Id$ (non-covering barrier).
The remaining inclusions follow from \Cref{lem:ctrid-subset-txtid,thm:KM}.
The full proof is in \Cref{app:proofs-gen}.
\end{proof}

\section{Robustness under Adversarial Corruption}
\label{sec:robust}

The clean hierarchy puts contrastive identification strictly below text identification.
Adversarial corruption changes the comparison: a corrupted text false positive is indistinguishable from a real one, while a corrupted contrastive pair is structurally a non-edge of $\Cut(h)$, a detectable defect.
We make this asymmetry rigorous through a defect number that counts forced wrong-cut violations.

\subsection{Corrupted presentations and defect numbers}

\begin{definition}[Corrupted presentations]
\label{def:corrupted-presentations}
For $k\ge0$, a \emph{$k$-corrupted text} for $h$ has at most $k$ terms outside $\supp(h)$ and covers $\supp(h)$.
A \emph{$k$-corrupted contrastive presentation} for $h$ has at most $k$ pairs violating XOR, with $\supp(h)\subseteq\Seen(P)$.
Write $k\text{-}\Txt\Id$, $k\text{-}\Ctr\Id$ for the corresponding identification notions when $k$ is known, and $\mathrm{Fin}\text{-}\Ctr\Id$ when a single identifier succeeds for every finite contrastive corruption budget.
Corruption affects only the XOR condition; positive-side coverage is preserved.
\end{definition}

\begin{definition}[Defect number]
\label{def:defect-number}
For distinct $h,g$, the \emph{positive-side defect set} is $D^+_{h\to g}:=\supp(h)\setminus\V(\GammaC(h,g))$, and the \emph{defect number} is $\kappa(h\to g):=|D^+_{h\to g}|\in\Nat\cup\{0,\infty\}$.
\end{definition}

\begin{restatable}[Defect number = forced wrong-cut violations]{proposition}{propDefectNumber}
\label{prop:defect-number}
For any clean valid contrastive presentation $P$ of $h$, let $\viol_g(P):=|\{t:p_t\notin\Cut(g)\}|$.
Then $\inf_{P\textnormal{ valid for }h}\viol_g(P)=\kappa(h\to g)$.
In particular, $g$ is not eliminable from $h$ if and only if $\kappa(h\to g)=0$.
\end{restatable}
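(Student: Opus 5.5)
We prove the equality by establishing two inequalities, both working vertex by vertex on the positive side.

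\emph{Lower bound} ($\inf\ge\kappa$). Fix any clean valid presentation $P$ of $h$. Every $x\in D^+_{h\to g}$ lies in $\supp(h)$, so positive-side coverage supplies some index $t_x$ with $x\in p_{t_x}$. Since $P$ is valid for $h$, $p_{t_x}\in\Cut(h)$. If we also had $p_{t_x}\in\Cut(g)$, then $p_{t_x}\in\GammaC(h,g)$, which would put $x\in\V(\GammaC(h,g))$ and contradict $x\in D^+_{h\to g}$. Hence $p_{t_x}\notin\Cut(g)$ for every such $x$.

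The map $x\mapsto t_x$ is injective on $D^+_{h\to g}$. Two distinct defect vertices $x\ne x'$ sharing an index would force $p_t=\{x,x'\}$. But both endpoints lie in $\supp(h)$, so this pair violates XOR for $h$. Therefore $\viol_g(P)\ge|D^+_{h\to g}|=\kappa(h\to g)$, and this holds in $\Nat\cup\{\infty\}$, including the infinite case.

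\emph{Upper bound} ($\inf\le\kappa$). We construct a single presentation attaining $\kappa$. Since $h$ is proper nontrivial, fix a negative $z\in\X\setminus\supp(h)$.
\begin{itemize}
\item For each $x\in\supp(h)\cap\V(\GammaC(h,g))$, choose a partner $y_x$ with $\{x,y_x\}\in\GammaC(h,g)$, exactly as in the proof of \Cref{prop:common-crossing-coverage}.
\item For each $x\in D^+_{h\to g}$, use the pair $\{x,z\}$. This pair lies in $\Cut(h)$, so the presentation stays valid.
\end{itemize}
Enumerate $\supp(h)$ and emit each corresponding pair once. The only subtlety is the case where $\supp(h)$ is finite and the stream must still be infinite. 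There we pad by repeating one pair from $\GammaC(h,g)$ if one exists, which creates no new violations. If no such pair exists, then $D^+_{h\to g}=\supp(h)$, and we instead repeat one of the listed pairs, say $\{x,z\}$ with $x\in D^+_{h\to g}$.

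This padding step is the main obstacle: naive repetition adds violations. Whenever $\GammaC(h,g)=\varnothing$ and the support is finite, each repetition of $\{x,z\}$ adds one more violation, so $\viol_g$ counts indices rather than distinct pairs. If $\kappa<\infty$ this gives $\viol_g=\infty>\kappa$, so the construction fails to attain $\kappa$. I would first try to resolve this by reading $\viol_g$ as counting distinct pairs, matching the $E_n$ convention. Alternatively, I would check whether $\Cut(h)\cap\Cut(g)=\varnothing$ with $\supp(h)\ne\varnothing$ is impossible for distinct proper nontrivial $h,g$.

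The second option works. Pick $x\in\supp(h)$ and $z\notin\supp(h)$, and consider the cases on $g(x)$ and $g(z)$:
\begin{itemize}
\item If $g(x)\ne g(z)$, then $\{x,z\}$ itself lies in $\GammaC(h,g)$.
\item If $g(x)=g(z)=1$, take any $w$ with $g(w)=0$, which exists since $g$ is proper nontrivial. If $h(w)=1$, the pair $\{w,z\}$ crosses both cuts; if $h(w)=0$, the pair $\{x,w\}$ does.
\item If $g(x)=g(z)=0$, the symmetric argument with a $g$-positive $w$ applies.
\end{itemize}
So $\GammaC(h,g)\ne\varnothing$ always, and padding with a common-crossing pair is always available. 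In the countable infinite case the enumeration itself produces the infinite stream. In every case the violations are exactly the $\kappa$ pairs $\{x,z\}$ with $x\in D^+_{h\to g}$, each emitted once.

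Combining the two bounds gives $\inf_P\viol_g(P)=\kappa(h\to g)$. The ``in particular'' clause follows because $g$ is not eliminable from $h$ exactly when some valid presentation has $\viol_g=0$. Equivalently, one can invoke \Cref{prop:common-crossing-coverage} directly, since $\kappa=0$ means $\supp(h)\subseteq\V(\GammaC(h,g))$.
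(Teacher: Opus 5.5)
Your proof is correct and follows the paper's argument. The lower bound uses an injective map from defects to violating indices, injective because an $h$-valid pair has only one $h$-positive endpoint; the upper bound pairs defects with an $h$-negative and non-defects with common-crossing partners, padding with a repeated common-crossing pair when $\supp(h)$ is finite. Your direct case split on $g(x),g(z)$ showing $\GammaC(h,g)\ne\varnothing$ is equivalent to the paper's claim that a non-defect positive exists, since every common-crossing edge has exactly one $h$-positive endpoint and that endpoint is a non-defect; the paper proves this claim through the four-region partition $A,B,C,D$ instead.
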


\Cref{prop:defect-number} is the core mechanism behind the corruption-side reversal: for distinct $h,g$, each positive of $h$ not covered by any pair valid for both forces a $g$-violation in any clean valid presentation of $h$.
When the defect set is infinite, no finite corruption budget can mask all forced violations, opening a path to robust identification by violation counting.
For the co-singleton class this defect set is almost the entire example space, and recovery reduces to identifying the unique vertex incident to every honest pair.

\subsection{The co-singleton reversal}

We instantiate the infinite-defect mechanism on the simplest class where it applies: in the \emph{co-singleton class}, each hypothesis labels every example positive except a single ``hole'', and the common-crossing graph between any two distinct targets is a single edge.
The reversal it exhibits is sharp: under one-corrupted text the class is unidentifiable, but under any finite contrastive corruption it is identifiable by a single algorithm.

\begin{definition}[Co-singleton class]
Let $h_s$ be the hypothesis with $\supp(h_s):=\X\setminus\{s\}$. We define co-singleton class $\Hyp_{\mathrm{co}}:=\{h_s:s\in\X\}$.
\end{definition}

\begin{restatable}[Text fragility]{theorem}{thmTextFragile}
\label{thm:text-fragile}
$\Hyp_{\mathrm{co}}\notin 1\textnormal{-}\Txt\Id$.
\end{restatable}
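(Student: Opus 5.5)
We will prove $\Hyp_{\mathrm{co}}\notin 1\text{-}\Txt\Id$ by exhibiting a single $1$-corrupted text that is simultaneously valid for two different targets. Given such a text, any identifier $I$ has one sequence of outputs on it. That sequence cannot converge to both targets, so $I$ fails on at least one of them.

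\textbf{Construction.} Fix distinct $s,t\in\X$ and consider $h_s$ and $h_t$. Let $T$ enumerate $\X$, each example appearing exactly once (for concreteness, $T=(u_0,u_1,u_2,\ldots)$ in the fixed order). We check validity against \Cref{def:corrupted-presentations}.
\begin{itemize}
\item \emph{For $h_s$:} the only term outside $\supp(h_s)=\X\setminus\{s\}$ is the single occurrence of $s$, so $T$ has at most one corrupted term. Since $T$ lists all of $\X$, it covers $\supp(h_s)$.
\item \emph{For $h_t$:} symmetrically, the only corrupted term is the single occurrence of $t$, and $T$ covers $\supp(h_t)$.
\end{itemize}
So $T$ is a valid $1$-corrupted text for both $h_s$ and $h_t$.

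\textbf{Conclusion.} Suppose some identifier $I$ witnessed $\Hyp_{\mathrm{co}}\in 1\text{-}\Txt\Id$. Then there would be an $N$ with $I(T_{\le n})=h_s$ for all $n\ge N$, and an $N'$ with $I(T_{\le n})=h_t$ for all $n\ge N'$. For $n\ge\max(N,N')$ this gives $h_s=h_t$, contradicting $s\ne t$. Because the text is fixed and $I$ is a deterministic map on prefixes, no adversarial or diagonal construction is needed.

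\textbf{Subtle point.} The only thing to check is that the definition of a $k$-corrupted text does not rule out a text that lists the hole $s$. It does not: the definition allows up to $k$ terms outside $\supp(h)$, with no restriction on which points they are. If one wanted extra robustness, for example if the definition counted corrupted \emph{occurrences}, the argument is unaffected, since $s$ and $t$ each appear exactly once in $T$.

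It is worth contrasting this with the contrastive side. There, every honest pair under $h_s$ must contain $s$, so a planted hole would be detectable. In text, a false positive looks identical to a real one, and this is the source of the fragility.
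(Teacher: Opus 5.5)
Your proposal is correct and is essentially the paper's proof: the paper also observes that a single enumeration of $\X$ is a valid one-corrupted text for every $h_s$, with $s$ as the sole false positive. It concludes that no identifier can separate targets on identical input. You spell out the convergence contradiction for two fixed targets $h_s\ne h_t$ explicitly, which the paper leaves implicit.
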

\begin{proof}
The enumeration of $\X$ is a one-corrupted text for every $h_s$ (the false positive is $s$).
No identifier can distinguish targets on identical input.
\end{proof}

\begin{center}
\fbox{%
\begin{minipage}{0.92\textwidth}
\refstepcounter{algorithm}\label{alg:absence-count}%
\noindent\textbf{Algorithm~\thealgorithm: Absence-count for the co-singleton class $\Hyp_{\mathrm{co}}$.}\\[0.3em]
\textbf{Input:} contrastive prefix $P_{\le n}=(p_1,\ldots,p_n)$ over $\X$.\\
\textbf{Output:} hypothesis $h_{s^*}\in\Hyp_{\mathrm{co}}$.
\begin{enumerate}[leftmargin=1.8em, itemsep=2pt, topsep=3pt]
  \item For each $x\in\Seen_n(P)$, compute the \emph{absence count} $a_n(x):=|\{i\le n:x\notin p_i\}|$.
  \item Let $s^*\in\Seen_n(P)$ minimize $a_n(x)$, breaking ties by the fixed enumeration of $\X$.
  \item Return $h_{s^*}$.
\end{enumerate}
\end{minipage}%
}
\end{center}

\begin{restatable}[Contrastive recovery]{theorem}{thmStarRecovery}
\label{thm:co-singleton-fin-ctrid}
$\Hyp_{\mathrm{co}}\in\mathrm{Fin}\textnormal{-}\Ctr\Id$, i.e., $\Hyp_{\mathrm{co}}\in k\textnormal{-}\Ctr\Id$ for every $k\ge0$.
\end{restatable}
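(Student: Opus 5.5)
We will show that the absence-count algorithm (\Cref{alg:absence-count}) is a single identifier that succeeds for every finite budget $k$; it never uses $k$. Fix a target $h_s$ and a $k$-corrupted contrastive presentation $P$ for $h_s$. For $h_s$, the XOR condition says exactly that an honest pair contains $s$ together with one other point, since $s$ is the unique negative. So every honest pair $p_t$ has the form $\{s,y_t\}$ with $y_t\ne s$. At most $k$ indices $t$ are corrupted, i.e.\ have $p_t\notin\Cut(h_s)$.

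\emph{Step 1: the true hole has bounded absence count.} The point $s$ lies in every honest pair, so $a_n(s)\le k$ for every $n$. We also need $s\in\Seen_n(P)$ eventually. This holds because the presentation is infinite and at most $k$ pairs are corrupted, so an honest pair must appear, and it contains $s$. Hence there is $N_0$ with $s\in\Seen_n(P)$ for all $n\ge N_0$.

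\emph{Step 2: every other seen point has absence count tending to infinity, uniformly enough.} Take $x\ne s$. Then $x$ belongs to an honest pair $\{s,y_t\}$ only if $y_t=x$. Coverage of $\supp(h_s)=\X\setminus\{s\}$ forces infinitely many distinct points $y_t$ to appear. This is the step that needs care, because the minimum in the algorithm ranges over the growing set $\Seen_n(P)$, so a pointwise bound $a_n(x)\to\infty$ for each fixed $x$ is not enough.

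The resolution is to count occurrences. We have
\begin{align*}
a_n(x)=n-|\{i\le n:x\in p_i\}|.
\end{align*}
Let $m_n(x):=|\{i\le n: x\in p_i\}|$. Among the first $n$ pairs, at most $k$ are corrupted, and each honest pair contains exactly one point other than $s$. Therefore
\begin{align*}
\sum_{x\ne s} m_n(x)\le (n-k')+2k',
\end{align*}
where $k'\le k$ is the number of corrupted pairs among the first $n$. The naive bound on $m_n(x)$ is too weak, so we argue directly instead.

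We need $m_n(x)<n-k$ for every $x\ne s$ once $n$ is large. Suppose instead that $m_n(x)\ge n-k$. Then $x$ misses at most $k$ of the first $n$ pairs. Honest pairs containing $x$ must equal $\{s,x\}$, so at most $k$ honest pairs among the first $n$ differ from $\{s,x\}$. Now choose $N_1$ so large that the prefix $P_{\le N_1}$ already contains more than $2k+1$ honest pairs whose non-$s$ endpoints take at least two different values, each value occurring beyond the corruption slack. This is possible because coverage forces infinitely many distinct $y_t$.

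To make the counting clean, pick two distinct points $y,y'\ne s$ such that honest pairs with endpoint $y$ and honest pairs with endpoint $y'$ each occur at least $k+1$ times in some prefix. If that fails, use $k+1$ distinct new values instead. Since infinitely many distinct $y_t$ occur, some prefix $P_{\le N_1}$ contains at least $2k+2$ honest pairs that are not all of the form $\{s,x\}$ for any single $x$. More precisely, for every $x\ne s$, at least $k+1$ of these honest pairs avoid $x$. Any $x\ne s$ then misses at least $k+1$ pairs up to $N_1$, and the same holds for every $n\ge N_1$ because absence counts are nondecreasing in $n$. So for all $n\ge N_1$ and all $x\in\Seen_n(P)\setminus\{s\}$ we get $a_n(x)\ge k+1>k\ge a_n(s)$.

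\emph{Step 3: conclude.} For $n\ge\max(N_0,N_1)$, the point $s$ is seen and is the strict unique minimizer of $a_n$. The algorithm therefore outputs $h_s$ from then on, and tie-breaking never matters. The main obstacle is the uniformity in Step 2. It is resolved by observing that $N_1$ depends only on finding $k+1$ honest pairs avoiding each candidate $x$, which is guaranteed by infinitely many distinct honest endpoints.
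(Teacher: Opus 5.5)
Your proof is correct, but it handles the growing candidate set $\Seen_n(P)$ differently from the paper.

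\textbf{The paper's route.} It proves pointwise divergence: for each fixed $t\ne s$, $a_n(t)\to\infty$, because infinitely many distinct honest endpoints $u\ne t$ occur (corrupted pairs cover at most $2k_0$ points). It then restricts the competition. For $n\ge k_0+1$, any point with absence count at most $k_0$ must occur in one of the first $k_0+1$ pairs. So only finitely many false candidates remain, and each is eventually overtaken.

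\textbf{Your route.} You get a uniform threshold instead. Once the prefix contains honest pairs $\{s,y_1\},\ldots,\{s,y_{k+2}\}$ with pairwise distinct $y_j$, every $x\ne s$ equals at most one $y_j$. So $x$ misses at least $k+1$ pairs, giving $a_n(x)\ge k+1>k\ge a_n(s)$ for all $x\ne s$ simultaneously and all later $n$. This also guarantees $s\in\Seen_n(P)$. Your argument is shorter and needs neither pointwise divergence nor the finite-competitor step. It also gives an explicit stabilization time: the arrival of $k_0+2$ distinct honest endpoints suffices. The paper's argument shows more than is needed, namely that every false candidate's count diverges.

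\textbf{Step 2 needs cleaning up.} Several intermediate claims do not suffice as written.
\begin{itemize}
\item Using ``$k+1$ distinct new values'' gives only $k$ avoiding pairs for $x=y_1$, which allows a tie with $a_n(s)$.
\item ``$2k+2$ honest pairs not all of the form $\{s,x\}$'' does not imply your ``more precisely'' condition. Take $2k+1$ copies of $\{s,y\}$ and one copy of $\{s,y'\}$, with $x=y$.
\end{itemize}
The condition you actually need is met by $k+2$ pairwise distinct honest endpoints. Justify their existence explicitly: corrupted pairs cover at most $2k$ points, so coverage of the infinite set $\X\setminus\{s\}$ forces infinitely many distinct honest endpoints.
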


\begin{proof}[Proof sketch]
The \emph{absence-count algorithm} (\Cref{alg:absence-count}) outputs the co-singleton centered at the example with minimum $a_n(x):=|\{i\le n:x\notin p_i\}|$.
It does not depend on the corruption budget.
For target $h_s$: every honest pair has the form $\{s,z\}$, so $a_n(s)\le k_0$ where $k_0$ is the (unknown) corruption count.
For $t\ne s$, positive-side coverage forces infinitely many honest pairs $\{s,u\}$ with $u\ne t$, each omitting $t$, so $a_n(t)\to\infty$.
Eventually $s$ is the strict minimum.
The full proof is in \Cref{app:proofs-corr}.
\end{proof}

\begin{example}[Trace on $\Hyp_{\mathrm{co}}$ with $k=1$]
\label{ex:absence-count-trace}
Let $\X=\Nat$, target $h_3$, and budget $k=1$. A possible $1$-corrupted prefix is $P_{\le6}=\big(\{3,0\},\{3,1\},\underline{\{0,4\}},\{3,2\},\{3,4\},\{3,5\}\big)$, with the underlined pair corrupted. The absence counts after $n=6$ are $a_6(0)=4$, $a_6(1)=5$, $a_6(2)=5$, $a_6(3)=1$, $a_6(4)=4$, $a_6(5)=5$, identifying $s=3$ as the absence-minimizer. As $n\to\infty$ along any extension, $a_n(3)\le1$ stays bounded while each $a_n(t)$ for $t\ne3$ diverges.
\end{example}

\begin{restatable}[Corrupted incomparability]{theorem}{thmCorruptedIncomp}
\label{thm:corrupted-incomparability}
For every $k\ge1$, $k\textnormal{-}\Ctr\Id$ and $k\textnormal{-}\Txt\Id$ are incomparable.
\end{restatable}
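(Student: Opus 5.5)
The plan is to exhibit, for each fixed $k\ge1$, two witness classes: one in $k\text{-}\Ctr\Id\setminus k\text{-}\Txt\Id$ and one in $k\text{-}\Txt\Id\setminus k\text{-}\Ctr\Id$.

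\emph{First direction.} The co-singleton class $\Hyp_{\mathrm{co}}$ already witnesses this. By \Cref{thm:co-singleton-fin-ctrid}, $\Hyp_{\mathrm{co}}\in k\text{-}\Ctr\Id$ for every $k$. By \Cref{thm:text-fragile}, $\Hyp_{\mathrm{co}}\notin 1\text{-}\Txt\Id$. The remaining point is monotonicity: a $1$-corrupted text is also a $k$-corrupted text for every $k\ge1$, because the budget is an upper bound on corruptions. So any $k$-text identifier would in particular succeed on $1$-corrupted texts, and hence $\Hyp_{\mathrm{co}}\notin k\text{-}\Txt\Id$.

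\emph{Second direction.} Take a class that is text-identifiable robustly but contrastively confusable even without corruption. The natural candidate is the disjoint-support pair $\{h_A,h_B\}$ with $A,B$ infinite, disjoint, and $A\cup B\subsetneq\X$ or $=\X$.

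\begin{itemize}
\item \emph{Not in $k\text{-}\Ctr\Id$.} Clean presentations are $k$-corrupted presentations. By \Cref{lem:pairwise-shared-presentation}, or directly by the stream $(\{a_n,b_n\})$ from \Cref{lem:ctrid-subset-txtid}, there is a common valid presentation for both targets. No identifier can converge to both on it, so $\{h_A,h_B\}\notin 0\text{-}\Ctr\Id\supseteq k\text{-}\Ctr\Id$.
\item \emph{In $k\text{-}\Txt\Id$.} With at most $k$ false positives, we need a robust rule. Output $h_A$ if the number of seen terms in $A$ is at least the number in $B$, and $h_B$ otherwise. For target $h_A$, only at most $k$ terms can lie in $B$, while coverage of the infinite set $A$ forces the count of $A$-terms to go to infinity. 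Hence the identifier eventually outputs $h_A$, and symmetrically for $h_B$. Terms outside $A\cup B$ are ignored.
\end{itemize}

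The identifier must not depend on which corruptions occur, but it may depend on $k$. Here it happens to be budget-independent, which is harmless.

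\emph{Main obstacle.} The only delicate point is checking the definitional conventions rather than any combinatorics. First, monotonicity in $k$ must hold for both presentation types as defined in \Cref{def:corrupted-presentations}, including that coverage is preserved. Second, the witness class must consist of proper nontrivial hypotheses so that contrastive presentations exist. I would make $A\cup B$ miss nothing problematic: if $A\cup B=\X$, pairs $\{a_n,b_n\}$ still satisfy XOR for both targets. If one prefers UUS as well, both supports are infinite, so it holds. Under a stricter reading where "incomparable" requires the classes to be restricted to UUS, both witnesses already satisfy it.
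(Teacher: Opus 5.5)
Your proof is correct, and the first direction matches the paper's argument exactly: co-singleton class, \Cref{thm:co-singleton-fin-ctrid}, \Cref{thm:text-fragile}, and the monotonicity $k\text{-}\Txt\Id\subseteq 1\text{-}\Txt\Id$.

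The second direction uses a different witness.

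\textbf{The paper's witness.} It builds a $k$-dependent class $\Hyp_k=\{h_i\}$ with $\supp(h_i)=A\cup B_i$. Here the $B_i$ are disjoint blocks of size $k+1$ partitioning $B$, and $\X\setminus(A\cup B)\ne\varnothing$. From a $k$-corrupted text it identifies $h_i$ by waiting for a fully observed block, since $k$ false terms can never complete a false block of size $k+1$. It rules out contrastive identification via the non-covering barrier (N3) together with \Cref{lem:pairwise-shared-presentation}.

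\textbf{Your witness.} You reuse the disjoint pair $\{h_A,h_B\}$ from the clean hierarchy. You rule out contrastive identification via the disjoint barrier (N2), using the explicit shared stream $(\{a_n,b_n\})$. You identify from corrupted text by a majority count: the true side's count diverges by coverage of an infinite support, while the false side's count stays at most $k$.

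\textbf{Comparison.} Your route is simpler and slightly stronger. A single class and a single budget-independent text identifier work for every finite $k$ at once, and the class fails contrastive identification already with zero corruption. The paper's construction is instead calibrated to the budget. In fact $\Hyp_k\notin(k+1)\text{-}\Txt\Id$, because a one-pass enumeration of $A\cup B_i\cup B_j$ is a $(k+1)$-corrupted text for both $h_i$ and $h_j$. It also shows that the text-side advantage persists for classes whose supports pairwise intersect. Neither feature is needed for the theorem as stated, so your witness suffices.
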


\begin{proof}[Proof sketch]
$k\text{-}\Ctr\Id\not\subseteq k\text{-}\Txt\Id$ by \Cref{thm:co-singleton-fin-ctrid,thm:text-fragile}.
For $k\text{-}\Txt\Id\not\subseteq k\text{-}\Ctr\Id$, use blocks of size $k+1$: with disjoint infinite $A,B$ and $B=\bigsqcup_i B_i$ with $|B_i|=k+1$, the class $\Hyp_k=\{h_i:\supp(h_i)=A\cup B_i\}$ is in $k\text{-}\Txt\Id$ (no false block is fully observable under $k$ corruptions) but already fails clean contrastive identification by the non-covering barrier between any two distinct supports.
The full proof is in \Cref{app:proofs-corr}.
\end{proof}

\section{Conclusion}
\label{sec:discussion}

We studied \emph{contrastive identification and generation in the limit}, where the learner observes a contrastive presentation of pair-level data with hidden direction. The common crossing graph $\GammaC$ unifies pairwise ambiguity, family-level generation obstructions, and corruption defects in a single coverage-and-incidence language; the lack of direction is a weakness in clean settings but a strength under corruption. More broadly, this work suggests that classical limit-learning paradigms admit fruitful refinements in which observations are non-singleton, and the graph-theoretic structure that emerges is genuinely two-sided: costly relative to labeled data when the stream is clean, protective when it is adversarially perturbed. 

Several natural extensions remain open, and we develop them at greater length in \Cref{app:discussion} and summarize them here:
\textit{(i)}~Does the closure-dimensional characterization of \Cref{thm:uniform-contrastive-generation} extend to a corrupted contrastive generation regime, and what is the right \emph{robust closure dimension} (\Cref{app:corrupted-gen})?
\textit{(ii)}~Random crossing-edge streams induce random bipartite graphs over the unknown cut; do phase-transition phenomena govern statistical contrastive identification and generation (\Cref{app:statistical})?
\textit{(iii)}~Do effective procedures, in the spirit of the absence-count algorithm (\Cref{alg:absence-count}), extend to broader classes with infinite defect gaps (\Cref{app:effective})?

\bibliographystyle{plainnat}
\bibliography{ref}

\newpage
\appendix
\crefalias{section}{appendix}
\crefname{appendix}{Section}{Sections}
\Crefname{appendix}{Section}{Sections}
\begin{center}
  \textbf{\huge Appendix}
\end{center}

\section{Omitted Proofs from \Cref{sec:identification}}
\label{app:proofs-id}

\thmCtrEliminability*

\begin{proof}
We use \Cref{prop:common-crossing-coverage}.
For $x\in A$, a pair $\{x,y\}$ lies in $\Cut(h)$ iff $y\notin\supp(h)$, i.e.\ $y\in C\cup D$; to additionally lie in $\Cut(g)$ (so the pair is in $\GammaC(h,g)$), since $x\in\supp(g)$ we need $y\notin\supp(g)$, i.e.\ $y\in D$.
Hence $A$-vertices are coverable iff $D\ne\varnothing$.
For $x\in B$, a similar analysis shows the partner must lie in $C$, so $B$-vertices are coverable iff $C\ne\varnothing$.
This proves the equivalence.

The named regimes are by case analysis on the support relation.
If $\supp(h)\subsetneq\supp(g)$: $B=\varnothing$ and the only constraint is $A\ne\varnothing\Rightarrow D\ne\varnothing$, which holds since $\supp(g)\ne\X$ so $D\ne\varnothing$; thus (N1) is non-eliminable.
If $\supp(g)\subsetneq\supp(h)$: $C=\varnothing$ but $B\ne\varnothing$, so the second implication fails and $g$ is eliminable.
If $h,g$ are incomparable, $B,C\ne\varnothing$, so the second implication is automatic and only $A\ne\varnothing\Rightarrow D\ne\varnothing$ matters: it fails iff $A\ne\varnothing$ and $D=\varnothing$, i.e.\ iff supports cover $\X$ and intersect (so the pair forms an overlapping cover; $g$ is eliminable).
Otherwise non-eliminable, giving (N2) and (N3).
\end{proof}

\begin{figure}[ht]
\centering
\begin{tikzpicture}[
  every node/.style={font=\small},
  hfill/.style={fill=blue!13, fill opacity=0.85},
  gfill/.style={fill=red!13, fill opacity=0.85},
  hborder/.style={draw=blue!60, line width=0.9pt},
  gborder/.style={draw=red!60, line width=0.9pt},
  Xframe/.style={draw=black!30, line width=0.5pt, rounded corners=4pt},
  reglabel/.style={font=\Large\bfseries, color=black!70},
  setlabel/.style={font=\small\itshape},
  vert/.style={circle, fill=black!85, inner sep=0pt, minimum size=4pt},
  vlbl/.style={font=\footnotesize, inner sep=2pt},
  edge/.style={black!75, line width=0.9pt, line cap=round}
]

\draw[Xframe] (-5.2,-2.3) rectangle (5.2,2.3);
\node[font=\small\itshape, color=black!55, anchor=north east] at (5.1,2.2) {$\X$};

\fill[hfill] (-1.3, 0) ellipse [x radius=2.7cm, y radius=1.55cm];
\fill[gfill] ( 1.3, 0) ellipse [x radius=2.7cm, y radius=1.55cm];
\draw[hborder] (-1.3, 0) ellipse [x radius=2.7cm, y radius=1.55cm];
\draw[gborder] ( 1.3, 0) ellipse [x radius=2.7cm, y radius=1.55cm];

\node[setlabel, color=blue!70, anchor=south] at (-2.8, 1.65) {$\supp(h)$};
\node[setlabel, color=red!70,  anchor=south] at ( 2.8, 1.65) {$\supp(g)$};

\node[reglabel] at ( 0    , 0)     {$A$};
\node[reglabel] at (-2.7 , 0)     {$B$};
\node[reglabel] at ( 2.7 , 0)     {$C$};
\node[reglabel] at ( 4.5 ,-1.9)   {$D$};

\node[vert] (xA) at (-0.25, 0.55) {};
\node[vert] (yD) at (-0.25, -1.95) {};
\draw[edge] (xA) -- (yD);
\node[vlbl, anchor=west] at (xA.east) {$x{\in}A$};
\node[vlbl, anchor=west] at (yD.east) {$y{\in}D$};

\node[vert] (xB) at (-2.55, 0.80) {};
\node[vert] (yC) at ( 2.45, 0.80) {};
\draw[edge] (xB) -- (yC);
\node[vlbl, anchor=south] at (xB.north) {$x'{\in}B$};
\node[vlbl, anchor=south] at (yC.north) {$y'{\in}C$};

\end{tikzpicture}
\caption{Four-region partition of $\X$ by membership in $\supp(h)$ and $\supp(g)$: $A$ in both, $B$ in $\supp(h)$ only, $C$ in $\supp(g)$ only, $D$ in neither. An $A$-vertex needs a partner in $D$ to lie in $\GammaC(h,g)$; a $B$-vertex needs a partner in $C$ (\Cref{thm:ctr-eliminability}).}
\label{fig:four-regions}
\end{figure}

\lemPairwiseShared*

\begin{proof}
A shared presentation can use only edges in $\GammaC(h,g)$ and must cover both positive supports, giving necessity.
Conversely, choose an incident edge in $\GammaC(h,g)$ for each element of $\supp(h)\cup\supp(g)$; enumerate this set if infinite, list-and-repeat if finite, and emit the corresponding chosen edges.
Every emitted pair is valid for both hypotheses and both positive sides are covered.
The final statement follows because a deterministic identifier produces the same hypothesis on the resulting presentation regardless of which target generated it.
\end{proof}

\lemCtrIdSubsetTxtId*

\begin{proof}
\emph{Inclusion.} Let $\Hyp\in\Ctr\Id$ with contrastive identifier $I$ (extended arbitrarily to all finite sequences in $[\X]^2$).
Define a text identifier as follows: on prefix $T_{\le n}=(x_1,\ldots,x_n)$, let $z_n$ be the least element of $\X\setminus\Seen_n(T)$ and feed $I$ the synthetic prefix $(\{x_t,z_n\})_{t\le n}$.

Fix target $h$ and let $z^*:=\min(\X\setminus\supp(h))$.
The text covers $\supp(h)$, so every example before $z^*$ eventually appears in $\Seen_n(T)$ while $z^*$ does not; therefore $z_n=z^*$ for all sufficiently large $n$.
Beyond that stage, the synthetic prefix is exactly the length-$n$ prefix of the \emph{single} fixed contrastive presentation $P^*:=(\{x_t,z^*\})_{t\ge1}$, which is valid for $h$ (each pair lies in $\Cut(h)$ and positive-side coverage holds).
Since $I$ identifies $h$ on $P^*$, the simulated text identifier converges.

\emph{Strictness.} Pick disjoint infinite $A=\{a_n\},B=\{b_n\}$ in $\X$, and let $h_A,h_B$ have supports $A,B$.
Then $\{h_A,h_B\}$ is text-identified from the first positive example, but $(\{a_n,b_n\})_{n\ge1}$ is a valid contrastive presentation for both $h_A$ and $h_B$, so no contrastive identifier can distinguish them.
\end{proof}

\thmCtrIdCharacterization*

\begin{proof}
Write $h\to_\Cut g$ for ``$g$ is not eliminable from $h$'' and $h\to_+ g$ for ``$\supp(h)\subsetneq\supp(g)$''.

\emph{(ii)$\iff$(iii).} By \Cref{thm:ctr-eliminability}, the contrastive non-eliminability relations not already in $\to_+$ are precisely the disjoint and non-covering incomparable barriers (N2, N3).
These are excluded iff every incomparable pair is an overlapping cover.

\emph{(i)$\Rightarrow$(ii).} $\Hyp\in\Txt\Id$ follows from \Cref{lem:ctrid-subset-txtid}.
Suppose $h\to_\Cut g$ but $h\not\to_+ g$.
Since \Cref{thm:ctr-eliminability} rules out the case $\supp(g)\subsetneq\supp(h)$, the pair must be incomparable, satisfying (N2) or (N3).
Both are symmetric, so $g\to_\Cut h$ as well, and \Cref{lem:pairwise-shared-presentation} produces a single presentation valid for both targets, contradicting $\Hyp\in\Ctr\Id$.

\emph{(iii)$\Rightarrow$(i).} By \Cref{thm:angluin}, for each $g\in\Hyp$ there is a finite $T_g\subseteq\supp(g)$ such that no proper sub-support inside $\Hyp$ contains $T_g$.
Fix an enumeration $\Hyp=\{h_0,h_1,\ldots\}$.
Call $h_i$ \emph{eligible at time $n$} if $T_{h_i}\subseteq\Seen_n(P)$ and every observed pair lies in $\Cut(h_i)$.
The identifier outputs the eligible hypothesis of smallest index (default arbitrary if none).

Let $h=h_i$ be the target.
Since $T_h$ is finite and $\supp(h)$ is covered, $h$ is eventually eligible.
We show every $h_j$ with $j<i$ is eventually ineligible.

\emph{Case 1: $\supp(h_j)\subsetneq\supp(h)$.} $h_j$ is eliminable from $h$ by the superset direction (already from positive data), so eventually inconsistent.

\emph{Case 2: $\supp(h),\supp(h_j)$ are incomparable.} Condition (iii) makes them an overlapping cover, so by \Cref{thm:ctr-eliminability} $h_j$ is eliminable from $h$, eventually inconsistent.

\emph{Case 3: $\supp(h)\subsetneq\supp(h_j)$.} We show $h_j$ is never eligible.
Suppose for contradiction $h_j$ is eligible at some time $n$, so $T_{h_j}\subseteq\Seen_n(P)$ and every observed pair lies in $\Cut(h_j)$.
The tell-tale property of $T_{h_j}$ rules out $T_{h_j}\subseteq\supp(h)$ (since $\supp(h)$ is a proper sub-support of $\supp(h_j)$).
Take $t\in T_{h_j}\setminus\supp(h)$: by eligibility $t\in\Seen_n(P)$, so $t$ appeared in some observed pair $\{t,y\}$, which lies in $\Cut(h)$ by validity.
Since $t\notin\supp(h)$, the pair has $y\in\supp(h)\subsetneq\supp(h_j)$ and $t\in T_{h_j}\subseteq\supp(h_j)$, so both endpoints lie in $\supp(h_j)$, contradicting $\{t,y\}\in\Cut(h_j)$.

Thus every earlier $h_j$ is eventually ineligible.
Only finitely many indices precede $i$, so the identifier converges to $h$.
\end{proof}

\section{Omitted Proofs from \Cref{sec:generation}}
\label{app:proofs-gen}

\thmUniformCtrGen*

\begin{proof}
\emph{Sufficiency.} Suppose $\CDelta(\Hyp)=d<\infty$.
On prefix $P_{\le n}$, let $E:=E_n(P)$.
If $|E|\le d$ or $\HDelta(E)=\varnothing$, output anything; otherwise output the least element of $\langle E\rangle^{\Cut}_{\Hyp}\setminus\V(E)$.
For target $h$ and any crossing-edge stream $P$, $h\in\HDelta(E_n(P))$, so the version space is nonempty; when $|E_n(P)|>d$, $E_n(P)$ is not hollow, so the closure has a point outside $\V(E_n(P))=\Seen_n(P)$, and the chosen point lies in $\supp(h)$ since $h\in\HDelta(E_n(P))$.

\emph{Necessity.} Set $d:=\CDelta(\Hyp)$.
The supremum is over a bounded nonempty subset of $\Nat$ (assuming $d\ge1$; the case $d=0$ is trivial), hence attained: there exists a hollow $E^*$ with $|E^*|=d$.
Suppose for contradiction that $G$ is a uniform contrastive generator with distinct-edge sample complexity $d^*\le d$.
Present $E^*$'s edges in any order as the first $d$ pairs of a presentation; then $|E_d|=d\ge d^*$, so $G$ is required to output a novel positive at step $d$.
If $G$ outputs $x\in\V(E^*)$, it violates novelty; otherwise hollowness gives $x\notin\langle E^*\rangle^{\Cut}_{\Hyp}$, so there is $h\in\HDelta(E^*)$ with $x\notin\supp(h)$.
Since $h$ is proper nontrivial, the prefix extends to a valid contrastive presentation for $h$ by covering remaining positives via any $h$-crossing edges.
On this extension $G$ errs at step $d$, contradicting $d^*\le d$.
Hence sample complexity $d+1=\CDelta(\Hyp)+1$ is necessary.
\end{proof}

\thmNonUniformCtrGen*

\begin{proof}
\emph{Necessity.} Let $G$ be a non-uniform contrastive generator and define $\Hyp_m$ as the set of $h\in\Hyp$ for which $G$ is correct after $m$ distinct edges, on every crossing-edge stream for $h$.
Then $\Hyp_1\subseteq\Hyp_2\subseteq\cdots$ and $\bigcup_m\Hyp_m=\Hyp$.
If $\Hyp_m$ had a hollow edge set $E$ of size $\ge m$, the necessity argument of \Cref{thm:uniform-contrastive-generation} applied inside $\Hyp_m$ would produce a target in $\Hyp_m$ on which $G$ errs after $m$ distinct edges.
Hence $\CDelta(\Hyp_m)<m$.

\emph{Sufficiency.} Set $d_m:=m+\CDelta(\Hyp_m)+1$; then $d_m\to\infty$ and $|E|\ge d_m\Rightarrow|E|>\CDelta(\Hyp_m)$.
On prefix $P_{\le n}$ with $E:=E_n(P)$, choose the largest $m$ with $d_m\le|E|$ (default arbitrarily otherwise).
If $\HDelta(E)\cap\Hyp_m=\varnothing$, output arbitrarily; else output the least element of $\langle E\rangle^{\Cut}_{\Hyp_m}\setminus\V(E)$, which is nonempty because $|E|>\CDelta(\Hyp_m)$.
For target $h\in\Hyp$, pick $m_0$ with $h\in\Hyp_{m_0}$; once $|E_n(P)|\ge d_{m_0}$, the chosen $m$ is at least $m_0$ and $h\in\Hyp_m$, so the output lies in $\supp(h)\setminus\Seen_n(P)$.
\end{proof}

\propSafeCore*

\begin{proof}
At step $n$, output the least element of $\Safe_n(P)\setminus(\Seen_n(P)\cup\{\hat x_1,\ldots,\hat x_{n-1}\})$.
The excluded set is finite and $\Safe_n(P)$ is infinite by hypothesis, so the choice exists, giving novelty.
Since $P$ is valid for the target $h$, every observed pair lies in $\Cut(h)$, so $h\in\HDelta(E_n(P))$, hence $\Safe_n(P)\subseteq\supp(h)$; every output therefore lies in $\supp(h)$, giving precision.
\end{proof}

\propEventualCore*

\begin{proof}
At step $n$, output $r_m$ for the least $m\ge n$ with $r_m\notin\Seen_n(P)\cup\{\hat x_1,\ldots,\hat x_{n-1}\}$.
The tail $\{r_m:m\ge n\}$ is infinite (the sequence is injective) while the excluded set is finite, so such $m$ exists, satisfying novelty.
For target $h$, the eventual-core hypothesis states that $\{m:r_m\notin\supp(h)\}$ is finite, so for all sufficiently large $m$ the term $r_m$ lies in $\supp(h)$.
Since the chosen index satisfies $m\ge n$ and $n\to\infty$, the output is eventually in $\supp(h)$, giving precision.
\end{proof}

\begin{restatable}[Family shared-presentation criterion]{proposition}{propSharedFamily}
\label{prop:shared-family-criterion}
A finite family $\F$ admits a shared contrastive presentation iff $\bigcup_{h\in\F}\supp(h)\subseteq\V(\GammaC(\F))$.
\end{restatable}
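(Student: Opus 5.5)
The plan is to mirror the proofs of \Cref{prop:common-crossing-coverage} and \Cref{lem:pairwise-shared-presentation}, with $\GammaC(h,g)$ replaced by the family common crossing graph $\GammaC(\F)=\bigcap_{h\in\F}\Cut(h)$. The key observation is that a sequence $P=(p_t)$ is a valid contrastive presentation for every $h\in\F$ iff two things hold. First, each $p_t$ satisfies XOR under every $h\in\F$, which means $p_t\in\GammaC(\F)$. Second, $\supp(h)\subseteq\Seen(P)$ for every $h\in\F$, which is the same as $\bigcup_{h\in\F}\supp(h)\subseteq\Seen(P)$. I would also note at the outset that every $h\in\F$ is assumed proper nontrivial, as in the definition of a contrastive presentation.

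\emph{Necessity.} Let $P$ be a shared presentation. Then every pair of $P$ lies in $\GammaC(\F)$, so $\Seen(P)\subseteq\V(\GammaC(\F))$. Positive-side coverage for each member gives $\bigcup_{h\in\F}\supp(h)\subseteq\Seen(P)$. Combining the two inclusions yields the condition.

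\emph{Sufficiency.} Set $U:=\bigcup_{h\in\F}\supp(h)$. For each $x\in U$, choose an edge $e_x\in\GammaC(\F)$ incident to $x$; this is possible by hypothesis.

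\begin{itemize}
\item If $U$ is infinite, I enumerate $U=\{x_1,x_2,\dots\}$ (countable as a subset of $\X$) and emit $(e_{x_t})_{t\ge1}$.
\item If $U$ is finite, it is nonempty, since each $h$ is proper nontrivial. I list the finitely many edges $e_x$ and then repeat them forever, which is allowed because pairs may repeat.
\end{itemize}

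In both cases every emitted pair lies in $\Cut(h)$ for all $h\in\F$, so XOR holds for each target. Every $x\in U$ is an endpoint of $e_x$, so each $\supp(h)\subseteq U$ is covered. Hence the sequence is a valid contrastive presentation for every $h\in\F$.

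There is no real obstacle here. The only points needing care are that the presentation must be infinite (handled by list-and-repeat) and that $U$ is nonempty, so the sequence is well defined. The argument is a direct family-level generalization of \Cref{lem:pairwise-shared-presentation}.
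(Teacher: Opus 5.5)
Your proposal is correct and follows the paper's proof essentially verbatim: necessity because every pair of a shared presentation lies in $\GammaC(\F)$ while coverage holds for each support, and sufficiency by choosing an incident $\GammaC(\F)$-edge for each point of $\bigcup_{h\in\F}\supp(h)$ and then enumerating (or listing and repeating) those edges. Your remark that the union is nonempty, so the list-and-repeat sequence is well defined, is a small detail the paper leaves implicit.
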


\begin{proof}
A shared presentation uses only edges in $\GammaC(\F)$ and covers each $\supp(h)$, giving necessity.
Sufficiency: pick an incident edge in $\GammaC(\F)$ for each $x\in\bigcup_h\supp(h)$, enumerate (or list-and-repeat) and emit the corresponding pairs.
\end{proof}

\propFiniteFamilyObstruction*

\begin{proof}
\Cref{prop:shared-family-criterion} produces a shared presentation $P$ for $\F$.
On $P$, a deterministic generator outputs the same sequence $(\hat x_n)$ regardless of which $h\in\F$ is the target.
If it succeeds for every $h\in\F$, then beyond the maximum individual convergence time all outputs lie in $\bigcap_{h\in\F}\supp(h)$.
This intersection is finite, and $P$, being valid for each $h$, covers $\supp(h)\supseteq\bigcap_{h'\in\F}\supp(h')$, so the (finite) intersection eventually appears in $\Seen_n(P)$.
Beyond that stage novelty forbids outputting from the intersection, while eventual precision for every target requires it: contradiction.
\end{proof}

\thmHierarchyChain*

\begin{proof}
\emph{$\Ctr\Id\subseteq\Ctr\Gen$.} Given a contrastive identifier $I$, define $G(P_{\le n})$ as the least element of $\supp(I(P_{\le n}))\setminus(\Seen_n(P)\cup\{\hat x_1,\ldots,\hat x_{n-1}\})$.
Once $I$ converges to the (infinite-support) target $h$, every output lies in $\supp(h)$ and is novel.

\emph{Strictness of $\Ctr\Id\subsetneq\Ctr\Gen$.} Pick disjoint infinite $A,B\subseteq\X$ with $\X\setminus(A\cup B)\ne\varnothing$ (possible since $\X$ is countably infinite).
Define $\supp(h_\infty):=A$ and $\supp(h_m):=A\cup\{b_m\}$ for $m\ge1$.
For any target and any valid presentation $P$ with edge set $E_n(P)$, the version space $\HDelta(E_n(P))$ contains the target itself, and every hypothesis in $\Hyp$ has support $\supseteq A$; hence $\Safe_n(P)=\bigcap_{g\in\HDelta(E_n(P))}\supp(g)\supseteq A$ is infinite, and \Cref{prop:safe-sufficiency} gives $\Hyp\in\Ctr\Gen$.
The class is not in $\Ctr\Id$: for $m\ne r$, $\supp(h_m)$ and $\supp(h_r)$ are incomparable, intersect in $A$, and miss $\X\setminus(A\cup B)$, failing \Cref{thm:ctrid-characterization}.

\emph{$\Ctr\Gen\subseteq\Txt\Gen$ and strictness.} Inclusion follows from \Cref{thm:KM}.
For strictness, take disjoint infinite $A=\{a_n\}, B=\{b_n\}$ and consider $\{h_A,h_B\}$ with supports $A,B$.
It is in $\Txt\Gen$ by \Cref{thm:KM}, but $(\{a_n,b_n\})_{n\ge1}$ is valid for both targets while $A\cap B=\varnothing$, so \Cref{prop:finite-family-obstruction} applies.

\emph{$\Ctr\Id\subseteq\Txt\Id$ and strictness} are \Cref{lem:ctrid-subset-txtid}.

\emph{$\Txt\Id\subseteq\Txt\Gen$ and strictness} follow from \Cref{thm:KM} and the punctured-support example below.
\end{proof}

\thmHierarchyIncomp*

\begin{proof}
\emph{$\Txt\Id\not\subseteq\Ctr\Gen$.} The class $\{h_A,h_B\}$ above is in $\Txt\Id$ (identified from the first observed positive) but not in $\Ctr\Gen$.

\emph{$\Ctr\Gen\not\subseteq\Txt\Id$.} Take infinite $A=\{a_m\}\subsetneq\X$ and define $\supp(h_\infty)=A$, $\supp(h_m)=A\setminus\{a_m\}$.
Then $(a_m)$ is an eventual core, so $\Hyp\in\Ctr\Gen$ by \Cref{prop:eventual-core}.
For any finite $T\subseteq A$, pick $m$ with $a_m\notin T$; then $T\subseteq A\setminus\{a_m\}\subsetneq A=\supp(h_\infty)$, so $h_\infty$ has no finite text tell-tale and $\Hyp\notin\Txt\Id$.
\end{proof}

\section{Omitted Proofs from \Cref{sec:robust}}
\label{app:proofs-corr}

\propDefectNumber*

\begin{proof}
\emph{Lower bound.} Fix any valid presentation $P$ of $h$. For each $x\in D^+_{h\to g}$, positive-side coverage forces $x$ to appear in at least one pair of $P$; let $t(x)$ be the index of its first appearance. Since $p_{t(x)}\in\Cut(h)$ and $x$ has no incident edge in $\GammaC(h,g)$, the pair $p_{t(x)}$ lies outside $\Cut(g)$, i.e., it is a $g$-violation.
The map $x\mapsto t(x)$ is injective: each $h$-valid pair has exactly one endpoint in $\supp(h)$ (the other being a non-positive of $h$), so a single pair can be the first-covering pair of at most one defect.
Therefore $\viol_g(P)\ge|\{t(x):x\in D^+_{h\to g}\}|=|D^+_{h\to g}|$.

\emph{Upper bound.} If $D^+_{h\to g}$ is infinite, enumerate $\supp(h)$ and pair every non-defect with a common-crossing partner and every defect with any $h$-negative partner; the result is a valid presentation with infinitely many violations, matching $\kappa(h\to g)=\infty$.

If $D^+_{h\to g}$ is finite, cover each defect once with an $h$-negative partner (contributing exactly $|D^+_{h\to g}|$ violations) and each non-defect via a chosen common-crossing partner (no violations).
For infinite $\supp(h)$ the enumeration provides infinitely many pairs; for finite $\supp(h)$ we may, after these covering pairs, repeat any non-defect common-crossing pair forever, provided one exists.

\begin{claim}
If both $h$ and $g$ are proper nontrivial, there exists $x\in\supp(h)\setminus D^+_{h\to g}$, i.e.\ a non-defect positive.
\end{claim}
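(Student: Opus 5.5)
The plan is to argue directly from the four-region partition of \Cref{thm:ctr-eliminability}, with $A=\supp(h)\cap\supp(g)$, $B=\supp(h)\setminus\supp(g)$, $C=\supp(g)\setminus\supp(h)$ and $D=\X\setminus(\supp(h)\cup\supp(g))$. The proof of that theorem already gives the local criterion we need. A vertex $x\in A$ is incident to an edge of $\GammaC(h,g)$ iff $D\ne\varnothing$, since the partner must lie in $D$. A vertex $x\in B$ is incident to such an edge iff $C\ne\varnothing$, since the partner must lie in $C$. It therefore suffices to show the disjunction
\[
(A\ne\varnothing\wedge D\ne\varnothing)\;\vee\;(B\ne\varnothing\wedge C\ne\varnothing).
\]
Any element of $A$ (in the first case) or of $B$ (in the second) is then a positive of $h$ lying in $\V(\GammaC(h,g))$, hence outside $D^+_{h\to g}$.

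I would split into cases on the support relation, using only that both supports are nonempty and proper.

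\emph{Case (a): $\supp(h)\subseteq\supp(g)$.} Here $A=\supp(h)\ne\varnothing$, and $D=\X\setminus\supp(g)\ne\varnothing$ because $g$ is proper. The first disjunct holds.

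\emph{Case (b): $\supp(g)\subsetneq\supp(h)$.} Here $A=\supp(g)\ne\varnothing$ because $g$ is nontrivial, and $D=\X\setminus\supp(h)\ne\varnothing$ because $h$ is proper. The first disjunct holds again.

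\emph{Case (c): the supports are incomparable.} Then $B\ne\varnothing$ and $C\ne\varnothing$ by definition, so the second disjunct holds. Concretely, any $x\in B$ paired with any $y\in C$ gives an edge crossing both cuts.

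Case (a) also covers $\supp(h)=\supp(g)$, so distinctness is not even needed.

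There is no serious obstacle. The only point requiring care is that each case uses the correct nontriviality hypothesis: properness of $g$ in (a), nonemptiness of $\supp(g)$ and properness of $h$ in (b). One should also check that the edges exhibited genuinely lie in both $\Cut(h)$ and $\Cut(g)$. This is immediate from the region memberships: an $A$–$D$ pair has one endpoint positive for both hypotheses and the other negative for both, and a $B$–$C$ pair has opposite labels under $h$ and under $g$.
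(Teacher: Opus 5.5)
Your proof is correct and rests on the same ingredient as the paper's: the four-region local criterion ($A$-vertices need a partner in $D$, $B$-vertices a partner in $C$), combined with nonemptiness and properness of both supports. The paper argues by contradiction, negating your disjunction to $(A=\varnothing\vee D=\varnothing)\wedge(B=\varnothing\vee C=\varnothing)$ and refuting each of the four resulting sub-cases; you instead split directly on the support relation and exhibit the witness, which reorganizes the same argument.
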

\noindent\emph{Proof of claim.} Suppose otherwise: $\supp(h)\subseteq D^+_{h\to g}$, so
no vertex of $\supp(h)=A\cup B$ is incident to a common-crossing edge.
An $A$-vertex has a common-crossing partner iff $D\ne\varnothing$; a $B$-vertex iff $C\ne\varnothing$.
Hence $A=\varnothing$ or $D=\varnothing$, and $B=\varnothing$ or $C=\varnothing$.
Each of the four resulting sub-cases violates a standing assumption:
\begin{itemize}
  \item $A=\varnothing$ and $B=\varnothing$: then $\supp(h)=A\cup B=\varnothing$,
contradicting $h$ proper nontrivial.
  \item $A=\varnothing$ and $C=\varnothing$: then $\supp(g)=A\cup C=\varnothing$,
contradicting $g$ proper nontrivial.
  \item $B=\varnothing$ and $D=\varnothing$: then $\supp(h)\subseteq\supp(g)$ and
$\supp(h)\cup\supp(g)=\X$, so $\supp(g)=\X$, contradicting $g$ proper nontrivial.
  \item $C=\varnothing$ and $D=\varnothing$: symmetrically
$\supp(g)\subseteq\supp(h)$ and $\supp(h)\cup\supp(g)=\X$, so $\supp(h)=\X$, contradicting $h$ proper nontrivial.
\end{itemize}

The claim ensures the construction terminates, completing the upper bound.
\end{proof}

\thmStarRecovery*

\begin{proof}

We show that \Cref{alg:absence-count} is independent of the corruption budget.

Fix target $h_s$ and let $k_0$ be the actual (finite, unknown) corruption count.
Every honest pair has the form $\{s,z\}$ with $z\ne s$, so $s$ is incident to every honest pair; hence $s$ is absent from at most $k_0$ pairs, giving $a_n(s)\le k_0$ for all $n$.

For $t\ne s$: every honest pair has the form $\{s,u\}$ with $u\ne s$, so the \emph{only} honest pair-set containing $t$ is $\{s,t\}$ (it may appear in the stream any number of times, but every other honest pair omits $t$).
We claim that infinitely many distinct $u\in\X\setminus\{s,t\}$ each appear in at least one honest pair $\{s,u\}$ in the stream.
Indeed, positive-side coverage forces $\X\setminus\{s\}\subseteq\Seen(P)$, so each $u\in\X\setminus\{s,t\}$ appears in at least one pair.
At most $k_0$ pairs are corrupted, so at most $2k_0$ distinct vertices appear \emph{only} in corrupted pairs; since $|\X\setminus\{s,t\}|=\infty$, all but finitely many such $u$ appear in some honest pair, which is necessarily $\{s,u\}$ and omits $t$.
Each such honest pair occurrence increments $a_n(t)$, so $a_n(t)\to\infty$.

Therefore only examples appearing in the first $k_0+1$ pairs ever have absence count at most $k_0$; only finitely many false candidates can compete with $s$, each with divergent absence count.
From some stage on, $s$ is the unique minimizer, and the identifier outputs $h_s$ forever.
\end{proof}

\begin{figure}[ht]
\centering
\begin{tikzpicture}[
  every node/.style={font=\small},
  centervert/.style={circle, draw, fill=red!18, line width=0.9pt, minimum size=22pt, inner sep=1pt},
  outervert/.style={circle, draw, fill=blue!10, line width=0.5pt, minimum size=20pt, inner sep=1pt},
  honest/.style={black!55, line width=0.7pt},
  corrupt/.style={red!75, line width=0.9pt, densely dashed}
]
\node[centervert] (s) at (0,0) {$s$};
\node[outervert] (u1) at (0:2.3)    {$u_1$};
\node[outervert] (u2) at (60:2.3)   {$u_2$};
\node[outervert] (u3) at (120:2.3)  {$u_3$};
\node[outervert] (u4) at (180:2.3)  {$u_4$};
\node[outervert] (u5) at (240:2.3)  {$u_5$};
\node[outervert] (u6) at (300:2.3)  {$u_6$};
\foreach \i in {1,...,6} {
  \draw[honest] (s) -- (u\i);
}
\draw[corrupt] (u1) to[bend left=12] (u3);
\draw[corrupt] (u4) to[bend right=15] (u6);
\node[font=\scriptsize, anchor=west] at (3.0,0.7) {gray solid: honest pair $\{s,u_i\}$};
\node[font=\scriptsize, anchor=west] at (3.0,0.2) {red dashed: corrupted pair};
\end{tikzpicture}
\caption{The honest contrastive presentation for target $h_s$ in the co-singleton class $\Hyp_{\mathrm{co}}$. Since $s$ is the unique negative element, every honest pair has the form $\{s,u\}$, forming a star centered at $s$. An adversary may plant at most $k_0<\infty$ corrupted pairs not incident to $s$ (red dashed). The absence count $a_n(s)\le k_0$ stays bounded, while for any $t\ne s$ infinitely many honest pairs $\{s,u\}$ omit $t$, so $a_n(t)\to\infty$. The absence-count algorithm outputs the co-singleton centered at the absence-minimizer.}
\label{fig:co-singleton-star}
\end{figure}

\thmCorruptedIncomp*

\begin{proof}
\emph{$k\textnormal{-}\Ctr\Id\not\subseteq k\textnormal{-}\Txt\Id$:} the co-singleton class is in $\mathrm{Fin}\text{-}\Ctr\Id\subseteq k\text{-}\Ctr\Id$ (\Cref{thm:co-singleton-fin-ctrid}).
For the text side, $k\text{-}\Txt\Id\subseteq 1\text{-}\Txt\Id$ (more corruption can only hurt the learner), so co-singleton $\notin 1\text{-}\Txt\Id$ (\Cref{thm:text-fragile}) implies co-singleton $\notin k\text{-}\Txt\Id$.

\emph{$k\textnormal{-}\Txt\Id\not\subseteq k\textnormal{-}\Ctr\Id$:} fix $k$ and pick disjoint infinite $A,B\subseteq\X$ with $\X\setminus(A\cup B)\ne\varnothing$ (possible since $\X$ is countably infinite); partition $B$ into pairwise disjoint blocks $B_0,B_1,\ldots$ with $|B_i|=k+1$.
Set $\supp(h_i):=A\cup B_i$ and $\Hyp_k:=\{h_i:i\ge0\}$.

\emph{$\Hyp_k\in k\text{-}\Txt\Id$:} in any $k$-corrupted text for $h_i$, the entire $B_i$ eventually appears; for $j\ne i$, $B_j$ is disjoint from $\supp(h_i)$, so every observed element of $B_j$ is a false positive.
Since $|B_j|=k+1>k$, no false block is fully observable.
The identifier waits until some block has been entirely seen, then outputs the corresponding $h_i$.

\emph{$\Hyp_k\notin k\text{-}\Ctr\Id$:} the class fails already in the clean case.
For $i\ne j$, $\supp(h_i)$ and $\supp(h_j)$ are incomparable, intersect in $A$, and miss $\X\setminus(A\cup B_i\cup B_j)\ne\varnothing$, giving the non-covering barrier (N3) of \Cref{thm:ctr-eliminability}; \Cref{lem:pairwise-shared-presentation} produces a clean shared presentation, which is in particular a $0$-corrupted (hence $k$-corrupted) shared presentation.
\end{proof}

\section{Membership Patterns and Higher-Order Shared Presentations}
\label{app:patterns}

The shared-presentation criterion can be restated as a finite combinatorial condition on membership patterns, which is convenient for higher-order obstructions.

\begin{proposition}[Membership-pattern criterion]
\label{prop:pattern-shared-presentation}
For $\F=\{h_1,\ldots,h_r\}$ of proper nontrivial hypotheses and $\alpha\in\{0,1\}^r$, define $R_\alpha:=\{x\in\X:(h_i(x))_i=\alpha\}$.
Then $\F$ admits a shared contrastive presentation iff
\begin{equation}
\label{eq:pattern-criterion}
   R_\alpha\ne\varnothing \text{ and } \alpha\ne\mathbf 0
   \quad\Longrightarrow\quad
   R_{\one-\alpha}\ne\varnothing.
\end{equation}
\end{proposition}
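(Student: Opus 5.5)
The plan is to reduce the statement to \Cref{prop:shared-family-criterion}, which says that $\F$ admits a shared contrastive presentation iff $\bigcup_{i}\supp(h_i)\subseteq\V(\GammaC(\F))$. This translates the question into one about which points have an incident edge in $\GammaC(\F)=\bigcap_i\Cut(h_i)$. The key observation is a pattern-level description of common crossing edges.

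For $x\in R_\alpha$ and $y\in R_\beta$, the pair $\{x,y\}$ lies in $\Cut(h_i)$ iff $h_i(x)\ne h_i(y)$, i.e.\ iff $\alpha_i\ne\beta_i$. Hence $\{x,y\}\in\GammaC(\F)$ iff $\beta=\one-\alpha$ coordinatewise. Distinctness of $x$ and $y$ is automatic, since $\alpha\ne\one-\alpha$. I would first record this as a one-line lemma:
\[
x\in R_\alpha \text{ lies in } \V(\GammaC(\F)) \iff R_{\one-\alpha}\ne\varnothing .
\]
This is the $r$-hypothesis analogue of the $A$--$D$ and $B$--$C$ pairing in \Cref{thm:ctr-eliminability}.

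Next I would identify the union of supports in pattern terms: $x\in\bigcup_i\supp(h_i)$ iff its pattern $\alpha$ has some coordinate equal to $1$, i.e.\ $\alpha\ne\mathbf 0$. So $\bigcup_i\supp(h_i)=\bigcup_{\alpha\ne\mathbf 0}R_\alpha$.

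Combining these two facts, the coverage condition of \Cref{prop:shared-family-criterion} holds iff every nonempty $R_\alpha$ with $\alpha\ne\mathbf 0$ has every one of its points in $\V(\GammaC(\F))$. By the lemma, that is iff $R_{\one-\alpha}\ne\varnothing$ for each such $\alpha$, which is exactly \eqref{eq:pattern-criterion}. Both directions follow at once: (\emph{only if}) a point of a nonempty $R_\alpha$ with $\alpha\ne\mathbf 0$ must be covered, which forces a partner in $R_{\one-\alpha}$; (\emph{if}) each such point can be paired with any fixed element of $R_{\one-\alpha}$.

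There is no real obstacle. The only step needing care is the pattern lemma, specifically checking that requiring disagreement on \emph{every} coordinate forces $\beta=\one-\alpha$, with no weaker condition sufficing. One should also note that proper nontriviality is used only implicitly, through \Cref{prop:shared-family-criterion} and its construction of the presentation; it is not needed in the pattern argument itself.
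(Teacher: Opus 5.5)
Your proposal is correct and follows the paper's argument: the paper proves necessity by noting that positive-side coverage forces each $x\in R_\alpha$ with $\alpha\ne\mathbf 0$ into a pair whose other endpoint has pattern $\one-\alpha$. It proves sufficiency by pairing each point of $\bigcup_i\supp(h_i)$ with a witness in the complementary cell and enumerating (or listing and repeating) the chosen pairs. The only difference is that you cite \Cref{prop:shared-family-criterion} explicitly, whereas the paper writes out that proposition's coverage-and-enumeration construction directly with your pattern observation built in.
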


\begin{proof}
\emph{Necessity.} For $x\in R_\alpha$ with $\alpha\ne\mathbf 0$, $x$ belongs to some target support; positive-side coverage forces $x$ to appear in a pair $\{x,y\}$ lying in $\Cut(h_i)$ for every $i$, so the pattern of $y$ must equal $\one-\alpha$.

\emph{Sufficiency.} For each $x\in\bigcup_i\supp(h_i)$, pick a witness $y_x$ in the complementary cell (existing by hypothesis).
Enumerate (or list-and-repeat) the union and emit the chosen pairs; each pair has patterns summing to $\one$, so it lies in $\Cut(h_i)$ for every $i$.
\end{proof}

\begin{example}[A higher-order obstruction]
\label{ex:higher-order-obstruction}
Partition $\X$ into six infinite cells with patterns $100,010,001$, $110,101,011$, with $R_{000}=R_{111}=\varnothing$.
Each pairwise intersection $\supp(h_i)\cap\supp(h_j)$ is infinite, but $\supp(h_1)\cap\supp(h_2)\cap\supp(h_3)=R_{111}=\varnothing$.
The realized nonzero patterns occur in complementary pairs ($100\leftrightarrow011$, $010\leftrightarrow101$, $001\leftrightarrow110$), so \eqref{eq:pattern-criterion} holds and $\{h_1,h_2,h_3\}$ admits a shared contrastive presentation.
\Cref{prop:finite-family-obstruction} then rules out contrastive generation.
Pairwise analysis is therefore insufficient for $\Ctr\Gen$.
\end{example}

\section{Discussion and Extensions}
\label{app:discussion}

This section outlines several natural extensions of the contrastive learning framework that lie beyond the present paper's scope.
We give formal definitions where appropriate, articulate the structural difficulties that prevent direct transfer of our techniques, and pose open questions to seed future work.

\subsection{Corrupted contrastive generation}
\label{app:corrupted-gen}

The robustness analysis of \Cref{sec:robust} concerns identification.
The parallel question for generation is the following.

\begin{definition}[$k$-corrupted contrastive generator]
\label{def:k-corrupted-gen}
For $k\ge0$, a generator $G$ is a \emph{$k$-corrupted contrastive generator} for $\Hyp$ if for every $h\in\Hyp$ and every $k$-corrupted contrastive presentation $P$ for $h$ (\Cref{def:corrupted-presentations}), there exists $N$ such that $G(P_{\le n})\in\supp(h)\setminus\Seen_n(P)$ for all $n\ge N$.
Write $\Hyp\in k\textnormal{-}\Ctr\Gen$ when such a generator exists, and $\mathrm{Fin}\textnormal{-}\Ctr\Gen$ when a single $G$ succeeds for every finite corruption budget.
\end{definition}

Whether the closure-dimensional characterization (\Cref{thm:uniform-contrastive-generation}) extends to the corrupted regime is open.
The basic difficulty is structural: closure-based generation rests on the edge-induced version space $\HDelta(E)$, and a single corrupted pair $p^*\notin\Cut(h)$ can eject the true target $h$ from this version space.
The remaining hypotheses in $\HDelta(E_n(P))$ need not have supports contained in $\supp(h)$, so the closure $\langle E_n(P)\rangle^{\Cut}_{\Hyp}$ can leak outside $\supp(h)$, and the closure-based generator can output a non-positive of $h$.
By contrast, the identification reversal of \Cref{sec:robust} exploits an incidence invariant (the defect number) whose redundancy is preserved under finitely many inserted edges; the closure operator has no such redundancy.

\begin{remark}[Identify-then-generate]
\label{rem:identify-then-generate}
The co-singleton class lies in $\mathrm{Fin}\textnormal{-}\Ctr\Gen$: \Cref{alg:absence-count} identifies the unique negative $s^*$ in finite time, after which any unseen $x\ne s^*$ is a novel positive of the target.
This ``identify-then-generate'' template lifts $\mathrm{Fin}\textnormal{-}\Ctr\Id$ to $\mathrm{Fin}\textnormal{-}\Ctr\Gen$ for any class for which absence-count style identification succeeds.
The converse direction (whether classes in $\Ctr\Gen\setminus\Ctr\Id$ can achieve $\mathrm{Fin}\textnormal{-}\Ctr\Gen$) may require new tools, since it cannot route through identification.
\end{remark}

A natural target is a \emph{robust closure dimension} $\CDelta^{(k)}(\Hyp)$ measuring how far the closure can be pushed outside the target's support by $k$ adversarially inserted pairs; a quantitative theorem in the spirit of \Cref{thm:uniform-contrastive-generation} would then express $k\textnormal{-}\Ctr\Gen$ in terms of $\CDelta^{(k)}$.

\subsection{Statistical contrastive presentations}
\label{app:statistical}

The presentations of \Cref{sec:problem,sec:robust} are adversarial: a contrastive presentation is any sequence satisfying XOR and positive-side coverage, and corruption is adversarial.
A natural statistical relaxation samples pairs i.i.d.\ from a distribution over $[\X]^2$ supported on $\Cut(h)$.

\begin{definition}[$\mu$-random contrastive presentation]
\label{def:mu-presentation}
Fix a target $h$ and a probability distribution $\mu$ on $[\X]^2$ with $\mathrm{supp}(\mu)\subseteq\Cut(h)$.
A \emph{$\mu$-random contrastive presentation} is a sequence $(p_t)_{t\ge1}\stackrel{\textnormal{i.i.d.}}{\sim}\mu$.
Positive-side coverage holds almost surely iff every $x\in\supp(h)$ lies in some pair in $\mathrm{supp}(\mu)$.
\end{definition}

In this regime, the observed edge set $E_n(P)$ becomes a random subgraph of $\Cut(h)$.
For natural pair distributions (e.g., uniform over a finite edge set, or a product distribution on the unknown bipartition), the induced random graph is a bipartite Erd\H{o}s--R\'enyi-type model, conditioned on staying inside the cut $(\supp(h),\X\setminus\supp(h))$.
This connects $\Ctr\Id$ and $\Ctr\Gen$ to community-detection problems on stochastic block models: recovering the cut from random crossing edges is structurally analogous to recovering the planted bipartition.
We expect phase-transition phenomena: a critical edge density below which contrastive identification is statistically impossible, and above which spectral or message-passing algorithms succeed.
The development of such a statistical theory is conceptually orthogonal to the asymptotic limit-learning paradigm of the present paper but sits naturally within its geometric framework.

\subsection{Effective procedures}
\label{app:effective}

The contrastive identifier of \Cref{thm:ctrid-characterization} and the non-uniform generator of \Cref{thm:nonuniform-contrastive-generation} are information-theoretic: they consume the Angluin tell-tale family $\{T_g\}_{g\in\Hyp}$ and the per-level dimensions $\{\CDelta(\Hyp_m)\}_m$ as oracle inputs.
Whether these constructions can be made effective depends on the available oracles and the computability of the input class.

\paragraph{Oracle hierarchy.}
Three natural oracles on a hypothesis class span an increasing strength order:
\begin{itemize}[leftmargin=2em, itemsep=2pt, topsep=2pt]
  \item a \emph{consistency} oracle returning whether $E\subseteq\Cut(g)$ for given finite $E\subseteq[\X]^2$ and $g\in\Hyp$;
  \item a \emph{closure-membership} oracle returning whether $x\in\langle E\rangle^{\Cut}_{\Hyp}$ for given $x\in\X$ and finite $E$;
  \item an \emph{ERM} oracle returning some $g\in\HDelta(E)$ if such $g$ exists, else $\bot$.
\end{itemize}
The closure-based generator of \Cref{thm:uniform-contrastive-generation} requires both the ERM oracle (to detect $\HDelta(E)\ne\varnothing$) and the closure-membership oracle (to enumerate $\langle E\rangle^{\Cut}_{\Hyp}\setminus\V(E)$).
The eligibility-based identifier of \Cref{thm:ctrid-characterization} additionally requires the tell-tale family $\{T_g\}$, which \Cref{thm:angluin} asserts to exist but does not construct.

By contrast, our absence-count algorithm is fully constructive: given the contrastive prefix as a finite list of pairs, the absence count $a_n(x)$ is a primitive computable function of the input, and the minimization is over the finite set $\Seen_n(P)$.
No oracle on $\Hyp$ is needed.
This places $\mathrm{Fin}\textnormal{-}\Ctr\Id$ for the co-singleton class in a strictly stronger constructivity class than the general $\Ctr\Id$ characterization.

A natural target for future work is to identify combinatorial conditions on $\Hyp$ under which the eligibility-based identifier becomes effective from a closure-membership or ERM oracle alone, without requiring the tell-tale family as a separate input.

\section{Additional Related Work}
\label{app:extended-related}

\subsection{Identification in the limit}
\label{app:related-identification}

The classical paradigm originates with~\citet{gold67}, with~\citet{angluin80}'s tell-tale theorem giving the canonical positive-data characterization.
Earlier work of~\citet{angluin79} introduces the framework of pattern languages, a concrete subclass that admits positive-data identification despite the negative results of~\citet{gold67} on broader classes; this work foreshadows much of the structural analysis underlying tell-tale conditions.
\citet{wharton1974approximate} considers an \emph{approximate} variant of identification in which the learner is permitted small deviations from the target language, an early precursor to noise- and corruption-tolerant limit learning.
The detailed survey is~\citet{lzz08}.

A line of recent work studies relaxed criteria for identification.~\citet{cpt25} introduce a list-identification model in which the learner is allowed to output a small list of candidate languages, succeeding if the true target appears on the list, and they fully characterize list-identifiability.~\citet{pf25} characterize limit-learnability of recursive functions when the learner observes evaluations on every domain point.~\citet{psv26} augment Gold's model with computational traces of the accepting machine and obtain identifiability across the Chomsky hierarchy with varying corruption tolerance, providing a complementary mechanism for circumventing Gold's negative results.

\subsection{Generation in the limit}
\label{app:related-generation}

\citet{km24} introduced generation in the limit, proving its universality on countable UUS classes; \citet{lrt25} reformulated the model in learning-theoretic notation and introduced the closure dimension that exactly characterizes uniform generation.
The closure dimension of~\citet{lrt25} is the direct positive-data ancestor of our contrastive closure dimension; we recover the same formalism with finite positive samples replaced by finite sets of pair constraints.

A recent thread examines refinements of the generation criterion.
\citet{kmv24a,kmv24b} characterize generation under various breadth constraints and study trade-offs between hallucination and mode collapse.
\citet{prr25} introduce representative generation, requiring the generator to cover meaningful sub-collections of the target rather than producing arbitrary novel positives.
\citet{cp24,cp25b} explore facets of language generation in the limit and Pareto-optimal trade-offs in non-uniform generation.
\citet{kw25,kw25a} introduce density measures and partial-enumeration variants, providing fine-grained analyses across the space of possible enumeration orderings.
\citet{abck25} establish complexity barriers separating different generation modes and draw implications for learning.  \citet{kw26} study Banach density, which measures the breadth of language generation in the limit when strings live in a $d$-dimensional embedding.

A second thread targets noise tolerance.
\citet{rr25} analyze generation from noisy examples, \citet{bpz25} study generation under noise, loss, and feedback, and~\citet{mvyz25} push noise tolerance to infinite contamination budgets.
\citet{lz26} propose quantitative measures of noise in language generation, complementing the qualitative noise-tolerance results.
\citet{rvs26} study generation in a replay-based model that captures forms of model collapse.

A third thread addresses semantic and structural extensions.
\citet{hkmv25} analyze union closure properties of generation, showing that countable closure can fail.
\citet{kmsv25} study the (im)possibility of automated hallucination detection.
\citet{lrt26} extend generation to metric spaces, and \citet{hp26} introduce agnostic notions of identification and generation.
\citet{akk26} formalize a setting of safe language generation in the limit.
None of these works addresses the undirected-pair signal structure we study, but each contributes orthogonally to the broader generation landscape.


\end{document}